\titleformat*{\section}{\Large\bfseries} 
\theoremstyle{plain}
\newtheorem{theorem}{Theorem}[section]
\newtheorem{lemma}[theorem]{Lemma}
\theoremstyle{definition}
\newtheorem{definition}[theorem]{Definition}
\theoremstyle{remark}
\newcounter{example}
\title{Quasi-Bayes meets Vines}
\author{%
  David Huk \\
  Department of Statistics\\
  University of Warwick\\
  \texttt{David.Huk@warwick.ac.uk} \\
   \And
   Yuanhe Zhang \\
   Department of Statistics\\
  University of Warwick\\
   \texttt{Yuanhe.Zhang@warwick.ac.uk} \\
   \AND
Mark Steel \\
Department of Statistics\\
University of Warwick\\
\texttt{m.steel@warwick.ac.uk} \\
   \And
      Ritabrata Dutta  \\
   Department of Statistics\\
  University of Warwick\\
   \texttt{Ritabrata.Dutta@warwick.ac.uk} \\
}
\begin{document}

\maketitle

\begin{abstract}

    Recently proposed quasi-Bayesian (QB) methods \citep{fong2021martingale} initiated a new era in Bayesian computation by directly constructing the Bayesian predictive distribution through recursion, removing the need for expensive computations involved in sampling the Bayesian posterior distribution. This has proved to be data-efficient for univariate predictions, but extensions to multiple dimensions rely on a conditional decomposition resulting from predefined assumptions on the kernel of the Dirichlet Process Mixture Model, which is the implicit nonparametric model used. Here, we propose a different way to extend Quasi-Bayesian prediction to high dimensions through the use of Sklar's theorem by decomposing the predictive distribution into one-dimensional predictive marginals and a high-dimensional copula. Thus, we use the efficient recursive QB construction for the one-dimensional marginals and model the dependence using highly expressive vine copulas. Further, we tune hyperparameters using robust divergences (eg. energy score) and show that our proposed Quasi-Bayesian Vine (QB-Vine) is a fully non-parametric density estimator with \emph{an analytical form} and convergence rate independent of the dimension of data in some situations. Our experiments illustrate that the QB-Vine is appropriate for high dimensional distributions ($\sim$64), needs very few samples to train ($\sim$200) and outperforms state-of-the-art methods with analytical forms for density estimation and supervised tasks by a considerable margin.

\end{abstract}

\section{Introduction}

The estimation of joint densities $\mathbf{p}(\mathbf{x})$ is a cornerstone of machine learning as a looking glass into the underlying data-generating process of multivariate data $\mathbf{x}$. Methods that support explicit density evaluation are crucial in probabilistic modelling, with applications in variational methods \citep{kingma2013auto,rezende2014stochastic}, Importance Sampling \citep{arbel2021annealed,matthews2022continual}, Sequential Monte Carlo \citep{gu2015neural}, Markov Chain Monte Carlo (MCMC) \citep{song2017nice,hoffman2019neutra} and simulation-based inference \citep{papamakarios2019sequential,lueckmann2021benchmarking}. A prominent example are Normalising Flows (NF) \cite{papamakarios2017masked,durkan2019neural,papamakarios2021normalizing}, leveraging deep networks with invertible transformations for analytical expressions and sampling. Despite impressive performances, they require meticulous manual hyperparameter tuning and large amounts of data to train. Bayesian methods are another attractive approach for analytical density modelling where the central object of interest is the predictive density $\mathbf{p}^{(n)}(\mathbf{x})$, with the Dirichlet Process Mixture Model (DPMM) \cite{hjort2010bayesian} as the canonical nonparametric choice. However, obtaining analytical predictive densities commonly relies on computationally expensive MCMC methods scaling poorly to high-dimensions\footnote{For reference, with $d$ the dimensionality of the parameter space, Random Walk Metropolis-Hastings scales like $\mathcal{O}(d^{2})$ \citep{gelman1997weak}, Metropolis-adjusted Langevin algorithm like $\mathcal{O}(d^{5/4})$ \citep{roberts1998optimal}, and Hamiltonian Monte Carlo like  $\mathcal{O}(d^{4/3})$ \citep{beskos2013optimal}}.

Recently, \cite{fong2021martingale} heralded a revolution in Bayesian methods with efficient constructions of Quasi-Bayesian (QB) sequences of predictive densities through recursion alone, liberating Bayes from the reliance on MCMC. The seminal univariate Recursive Bayesian Predictive (R-BP) \citep{hahn2018recursive}, its multivariate extension \citep{fong2021martingale} and the AutoRegressive Bayesian Predictive (AR-BP) \citep{ghalebikesabi2022density} are all QB models targeting the predictive mean of the DPMM, with an analytical recursive form driven by bivariate copula updates. Notably, the AR-BP demonstrated superior density estimation capabilities compared to a suite of competitors on varied supervised and unsupervised datasets, is orders of magnitude faster than standard Bayesian methods and is data-efficient, i.e. does not require large amounts of training data to be effective. In multivariate QB predictives, analytical expressions are derived by enforcing assumptions on the DPMM kernel structure. The kernel of the R-BP is set to be independent across dimensions, where this strong assumption is too constraining for more complex data. This is relaxed in the AR-BP through an autoregressive form of the kernel, where each kernel mean is a similarity function of previous dimensions modelled through kernels or deep autoregressive networks, depending on the complexity of the data. The former relies on a fixed form that is often too simplistic while the latter loses the appeal of a data-efficient predictive like in the R-BP.

In this paper, we introduce the Quasi-Bayesian Vine (QB-Vine) 
for density estimation and supervised learning,
obtained by applying Sklar's theorem \citep{sklar} to the joint predictive, thereby dissecting it into univariate marginal predictive densities and a multivariate copula. Marginal predictives are modelled with the data-efficient univariate R-BP while the multivariate copula is modelled with a highly flexible vine copula, see the diagram in Figure \ref{fig:diagram}.
\begin{figure}[t]
    \centering
    \includegraphics[width=\linewidth]{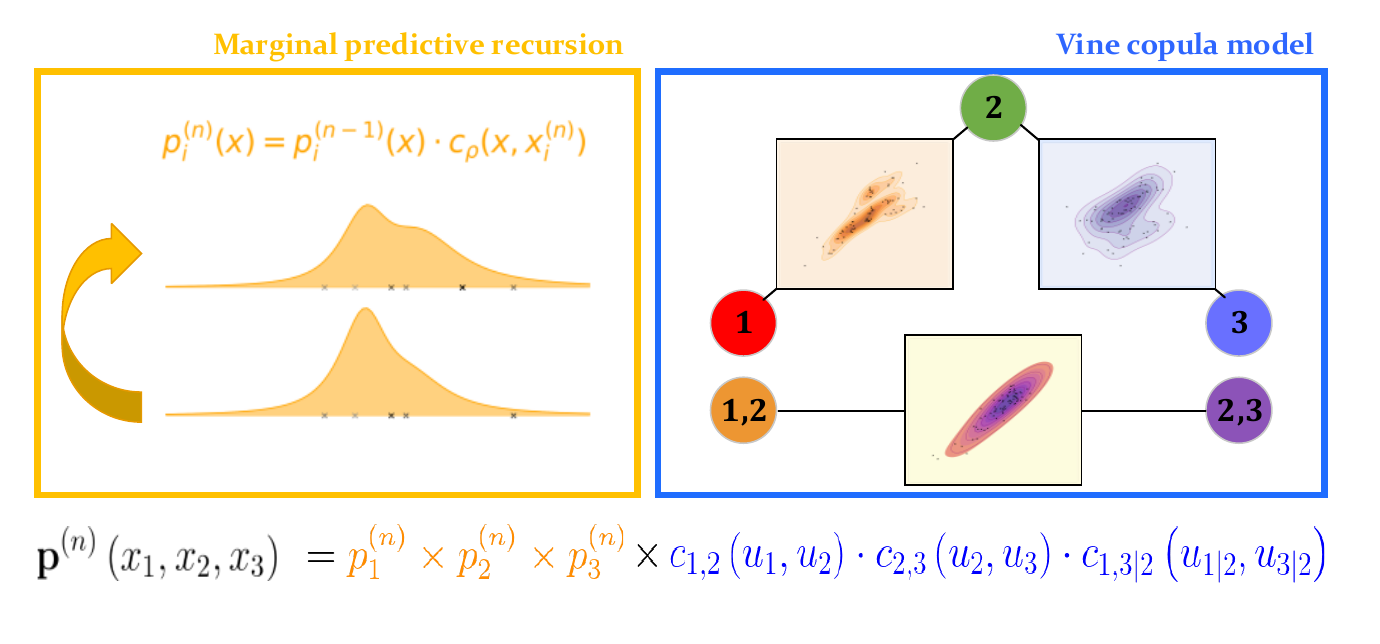}
    \caption{Overview of the Quasi-Bayesian Vine model. The joint predictive density $\mathbf{p}^{(n)}\left(x_1,\ldots, x_d\right)$ is modelled nonparametrically through a copula decomposition into univariate marginal predictives and a multivariate copula. \textbf{Left:} Univariate predictive densities $p_i^{(n)}$ are modelled with a data-efficient Quasi-Bayesian recursion using observed samples $\{x_i^{(k)}\}_{k=1}^n$ alone, bypassing posterior integration steps. \textbf{Right:} A vine copula model decomposes the high-dimensional copula into $\frac{d(d-1)}{2}$ two-dimensional copulas nonparametrically estimated with kernel density copula estimators represented by a graphical structure between dimensions}
    \label{fig:diagram}
\end{figure}
The \textbf{main contributions} of our work are as follows:
\begin{itemize}
    \item 

    This decomposition frees us from the need to assume specific kernel structures of the DPMM, but preserves the data-efficiency of QB methods while making it more effective on high-dimensional data.
    
    \item Under the assumption of well-identified simplified vine copula model, we show that the QB-Vine attains a convergence rate that is independent of dimension.

    \item The  decomposition of the joint density  and use of energy score to tune hyperparameters makes our algorithm amenable to efficient parallelisation with significant computational gains. 

\end{itemize}

Our paper is structured as follows. In Section \ref{sec:QBpred} we introduce Quasi-Bayesian prediction, recapitulating the R-BP construction. In Section \ref{sec:QBvine} we formulate the Quasi-Bayesian Vine model. We provide a succinct survey of related work in Section \ref{sec:related} and compare related methods to the QB-Vine in Section \ref{sec:exp} on a range of datasets for density estimation, regression and classification, achieving state-of-the-art performance with our model. We conclude with a discussion in Section \ref{sec:discussion}.


\section{Quasi-Bayesian prediction}
\label{sec:QBpred}

\paragraph{Notation.} Let $p(\mathbf{x})$ be a multivariate probability density function over $\mathcal{X}\subseteq\mathbb{R}^d$, from which we observe i.i.d. samples $\mathcal{D}_P=\{\mathbf{x}^{k}\}_{k=1}^{K} \sim p(\mathbf{x})$. Similarly, let $p_1(x_1),\ldots,p_d(x_d)$ be the marginal densities of $p(\mathbf{x})$, each over (a subset of) $\mathbb{R}$ with corresponding i.i.d. samples $\mathcal{D}_{P_i}=\{{x_i}^{k}\}_{k=1}^{K} \sim p_i({x_i})$ for $i=1,\ldots,d$, and assumed to all be continuous. Further, let $P$ and $P_1,\ldots,P_d$ be the respective cumulative distribution functions (cdf's) of the previously mentioned densities, in order of appearance. Finally, when discussing predictive densities, we will use a superscript, plain for data (e.g. $x^{n}$) and in parentheses for functions (e.g. $p^{(n)}$) to indicate the predictive at the $n^\text{th}$ step, distinguishing them from the subscripts kept for dimension.

\paragraph{Bayesian predictive densities as copula updates.}
Consider the univariate Bayesian predictive density $p^{(n)}$ for a future observation $x$ given seen i.i.d. data $x^{1:n} \in \mathbb{R}$ with a likelihood $f$ of the data and a posterior $\pi^{(n)}$ for the model parameters $\theta$ after $n$ observations. By Bayes rule, we have:
\begin{equation}
     p^{(n)}(x|x^{1:n}) = \frac{\int f(x|\theta) \cdot f(x^{n}|\theta) \cdot \pi^{(n-1)}(\theta|x^{1:n-1})~d\theta}{p^{(n-1)}(x^{n}|x^{1:n-1})}.
\end{equation}
As identified by \cite{hahn2018recursive,fong2021martingale,holmes2023statistical}, multiplying and dividing by the predictive from the previous step $p^{(n-1)}$, we arrive at 
\begin{equation}
\label{eq:Hahn}
\begin{split}
    p^{(n)}(x|x^{1:n}) &= p^{(n-1)}(x|x^{1:n-1}) \cdot \frac{\overbrace{\int f(x|\theta) \cdot f(x^{n}|\theta) \cdot \pi^{(n-1)}(\theta|x^{1:n-1})~d\theta}^{\text{ Joint density for $x,x^n$}}}{\underbrace{p^{(n-1)}(x^{n}|x^{1:n-1})}_{\text{ Marginal for $x^n$}}\cdot\underbrace{p^{(n-1)}(x|x^{1:n-1})}_{\text{ Marginal for $x$}} }\\
    &= p^{(n-1)}(x|x^{1:n-1}) \cdot c^{(n)}\left(P^{(n-1)}(x),P^{(n-1)}(x^n)\right)
\end{split}
\end{equation}
which identifies the involvement of copulas (see Appendix \ref{apdx:copulas}, \cite{elidan2013copulas,grosser2022copulae}) in this recursive equation for the predictive densities. The second term on the right-hand side of~\eqref{eq:Hahn} is seen to be a symmetric bivariate copula density function with the property that $c^{(n)}(\cdot,\cdot)\rightarrow 1$ to ensure $p^{(n)}$ converges asymptotically with $n$. It can be shown that every univariate Bayesian model can be written in this form \cite{hahn2018recursive}, and so has a unique copula sequence characterising its predictive updates by de Finetti's theorem \citep{de1937prevision,hewitt1955symmetric}. The choice of the predictive sequence then corresponds to an implicit choice of likelihood and prior \citep{berti2023probabilistic,garelli2024asymptotics}.

\paragraph{Recursive Bayesian Predictives.}
\label{sec:rbp}
Due to the integration over the posterior density often being intractable
in practice, identifying the update copulas $c^{(n)}$ analytically is generally impossible. Therefore, \cite{hahn2018recursive} propose a nonparametric density estimator termed recursive Bayesian predictive (R-BP) as a Dirichlet Process Mixture Model (DPMM) inspired recursion emulating~\eqref{eq:Hahn}. They derive the correct Bayesian update under a DPMM for step $1$ and use it for all future steps $m>1$ (derivations shown in Appendix \ref{apdx:dpmm}). The update copula of the R-BP is a mixture between the independent and Gaussian copula, thereby deviating from the true (unknown) form of the Bayesian recursion copulas for a DPMM.  For an initial choice of predictive density $p^{(0)}$ and distribution $P^{(0)}$, the obtained analytical expression for the R-BP recursion has the following predictive density
\begin{equation}
\label{eq:hahn_dens}
p^{(n)}(x) =  p^{(n-1)}(x)\cdot\left[(1-\alpha_n) + \alpha_n \cdot c_\rho(P^{(n-1)}(x), P^{(n-1)}(x^{n}))\right]
\end{equation}
with $c_\rho \in [0,1]$ being a bivariate Gaussian copula with covariance $\rho$. The corresponding cdf also admits an analytical expression as follows
\begin{equation}
\label{eq:hahn}
P^{(n)}(x)=(1-\alpha_n) \cdot P^{(n-1)}(x)+\alpha_n H_\rho(P^{(n-1)}(x), P^{(n-1)}(x^n)).
\end{equation}
Here, for $\Phi$ the standard univariate Gaussian distribution,
\begin{equation}
H_\rho(u, v)=\Phi\left(\frac{\Phi^{-1}(u)-\rho \cdot\Phi^{-1}(v)}{\sqrt{1-\rho^2}}\right) 
\end{equation}
is a conditional Gaussian copula distribution with covariance $\rho$ treated as a hyperparameter and where $\alpha_k = (2-\frac{1}{k})\frac{1}{k+1}$ is a sequence of weights converging to $0$ (See supplement E of \cite{fong2021martingale} for a more detailed explanation of the weights). This univariate R-BP model has been shown to converge to a limiting distribution $P^{(\infty)}$ with density $p^{(\infty)}$ termed the \emph{martingale posterior} \citep{fong2021martingale,fortini2023prediction,holmes2023statistical,berti2006almost,garelli2024asymptotics} which is a quasi-Bayesian object describing current uncertainty (See Appendix \ref{apdx:mart_post} for an introduction). Frequentist results such as Kullback-Leibler convergence to the true density are given in \cite{hahn2018recursive}. An extension to the multivariate case was studied by \cite{fong2021martingale} and refined by \cite{ghalebikesabi2022density} for data with more complex dependence structures. The recursive formulation of the R-BP with an analytical form ensures fast updates of predictives whilst the use of copulas bypasses the need to evaluate a normalising constant (as is the case in Newton's algorithm \citep{newton1998nonparametric,newton2002nonparametric}). Consequently, this approach is free from the reliance on MCMC to approximate a posterior density, making it much faster than regular DPMMs. While this formulation does not correspond to a Bayesian model, as argued by \cite{berti2004limit,berti2023probabilistic,fong2021martingale,holmes2023statistical}, if the recursive updates are conditionally identically distributed (as is the case for the recursion of~\eqref{eq:hahn}), they still exhibit desirable Bayesian characteristics such as coherence, regularization, and asymptotic exchangeability, motivating the Quasi-Bayesian name as used in \cite{fortini2020quasi}.

\section{Quasi-Bayesian Vine prediction}
\label{sec:QBvine}
We propose the Quasi-Bayesian Vine (QB-Vine) for efficiently modeling a high-dimensional predictive density $\mathbf{p}^{(n)}$ (and distribution $\mathbf{P}^{(n)}$). The efficiency is achieved by splitting the joint density into predictive marginals for each dimension and a high-dimensional copula; then using two different nonparametric density estimation schemes for the marginals and the vine, correspondingly the Quasi-Bayes recursion (described in Equation~\ref{eq:hahn_dens} and \ref{eq:hahn}) for each of the marginals and a vine copula to model the high-dimensional dependency. We first start by describing how we decompose the joint predictive density adapting Sklar's theorem (\cite{sklar}, see Appendix \ref{apdx:copulas}),
\begin{theorem}
\label{thm:sklar}
    Let $\mathbf{P}^{(n)}$ be an $d$-dimensional predictive distribution function with continuous marginal distributions $P^{(n)}_1, P^{(n)}_2, \ldots, P^{(n)}_d$. Then there exists a copula distribution $\mathbf{C^{(n)}}$ such that for all $\mathbf{x} = (x_1, x_2, \ldots, x_d) \in \mathbb{R}^d$:
\begin{equation}
\mathbf{P}^{(n)}(x_{1}, \ldots, x_{d}) = \mathbf{C}^{(n)}(P_{1}^{(n)}(x_{1}), \ldots, P_{d}^{(n)}(x_{d}))
\end{equation}
And if a probability density function (pdf) is available:
\begin{equation}
\label{thm:sklar_pdf}
\mathbf{p}^{(n)}(x_{1}, \ldots, x_{d}) = p_{1}^{(n)}(x_{1}) \cdot \ldots \cdot p_{d}^{(n)}(x_{d}) \cdot \mathbf{c}^{(n)}(P_{1}^{(n)}(x_{1}), \ldots, P_{d}^{(n)}(x_{d}))
\end{equation}
where $p_{1}^{(n)}(x_{1}), \ldots, p_{d}^{(n)}(x_{d})$ are the marginal predictive probability density functions (pdf), and $\mathbf{c}^{(n)}:[0,1]^d \rightarrow \mathbb{R}$ is the copula pdf.
\end{theorem}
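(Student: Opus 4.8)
The statement is the classical theorem of \citet{sklar} transported to the present setting; the one observation worth making is that, conditionally on the observed sample $\mathbf{x}^{1:n}$, the object $\mathbf{P}^{(n)}$ is an ordinary $d$-dimensional cumulative distribution function whose one-dimensional margins $P_1^{(n)},\dots,P_d^{(n)}$ are, by hypothesis, continuous, so no new probabilistic content is needed beyond recalling the standard construction. The plan is therefore: for each $i$ let $Q_i^{(n)}(u)=\inf\{t : P_i^{(n)}(t)\ge u\}$ be the (generalised) quantile function, and define
\begin{equation}
\mathbf{C}^{(n)}(u_1,\dots,u_d) := \mathbf{P}^{(n)}\!\left(Q_1^{(n)}(u_1),\dots,Q_d^{(n)}(u_d)\right),\qquad (u_1,\dots,u_d)\in[0,1]^d .
\end{equation}

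First I would verify that $\mathbf{C}^{(n)}$ is a genuine copula: it is grounded and $d$-increasing because $\mathbf{P}^{(n)}$ is, and its $i$-th univariate margin is $u\mapsto P_i^{(n)}(Q_i^{(n)}(u))=u$, the last equality holding precisely because $P_i^{(n)}$ is continuous, i.e. $P_i^{(n)}\circ Q_i^{(n)}=\mathrm{id}$ on $[0,1]$. Next I would establish the distributional identity: setting $u_i=P_i^{(n)}(x_i)$ and using continuity of $P_i^{(n)}$ once more (together with the fact that $\mathbf{P}^{(n)}$ is unchanged when a coordinate is moved within a $P_i^{(n)}$-null set), one gets $\mathbf{C}^{(n)}\!\left(P_1^{(n)}(x_1),\dots,P_d^{(n)}(x_d)\right)=\mathbf{P}^{(n)}(x_1,\dots,x_d)$ for every $\mathbf{x}\in\mathbb{R}^d$. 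Uniqueness of $\mathbf{C}^{(n)}$ on $[0,1]^d$ then follows since, the margins being continuous, their ranges are dense in $[0,1]$, so the identity pins $\mathbf{C}^{(n)}$ down on a dense set and hence everywhere by right-continuity.

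For the density form I would assume $\mathbf{p}^{(n)}$ exists and differentiate the identity $\mathbf{P}^{(n)}(x_1,\dots,x_d)=\mathbf{C}^{(n)}\!\left(P_1^{(n)}(x_1),\dots,P_d^{(n)}(x_d)\right)$ successively in $x_1,\dots,x_d$. Since each $P_i^{(n)}$ is absolutely continuous with density $p_i^{(n)}$, the multivariate chain rule yields
\begin{equation}
\mathbf{p}^{(n)}(x_1,\dots,x_d)=\left(\prod_{i=1}^d p_i^{(n)}(x_i)\right)\cdot \mathbf{c}^{(n)}\!\left(P_1^{(n)}(x_1),\dots,P_d^{(n)}(x_d)\right),
\end{equation}
where $\mathbf{c}^{(n)}:=\partial^d \mathbf{C}^{(n)}/\partial u_1\cdots\partial u_d$ is the copula density, which exists (Lebesgue-a.e.) whenever the joint density and all marginal densities do.

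Since the argument is entirely classical there is no substantive obstacle; the only points demanding care are measure-theoretic: the behaviour of the generalised inverse $Q_i^{(n)}$ on intervals where $P_i^{(n)}$ is flat (absorbed into the null-set remark above) and, for the density statement, the a.e.\ differentiability of $\mathbf{C}^{(n)}$ and the legitimacy of the chain-rule differentiation — both routine once the existence of $\mathbf{p}^{(n)}$ and of $p_1^{(n)},\dots,p_d^{(n)}$ is granted. In the write-up I would simply invoke \citet{sklar} for the distributional part and spell out only the one-line chain-rule computation for the density version.
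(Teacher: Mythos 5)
Your proposal is correct: it is the standard classical proof of Sklar's theorem (copula built from the generalised quantile functions, margins uniform by continuity, uniqueness from surjectivity of continuous margins onto $[0,1]$, and the density form by the chain rule), and it matches the paper, which offers no proof of its own and simply invokes \cite{sklar}, exactly as you propose to do in the write-up. No gaps; the only care points you flag (flat pieces of $P_i^{(n)}$ and a.e.\ differentiability of $\mathbf{C}^{(n)}$) are handled correctly.
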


The decomposition of \eqref{thm:sklar_pdf} has two consequences for modelling $\mathbf{p}^{(n)}$. Firstly, as the predictives $p_i^{(n)}$ are unconditional marginal densities, their estimation can be done independently. Secondly, by applying \eqref{thm:sklar_pdf} to two consecutive predictive densities $\mathbf{p}^{(m-1)}$ and $\mathbf{p}^{(m)}$, we obtain a recursive update for predictive densities with two parts, of the form:
\begin{equation}
    \frac{\mathbf{p}^{(m)}}{\mathbf{p}^{(m-1)}}= \underbrace{\prod_{i=1}^d\Bigl\{\frac{p_i^{(m)}}{p_i^{(m-1)}}\Bigr\}}_\text{Independent recursions}
    \cdot
    \underbrace{\frac{\mathbf{c}^{(m)}\Bigl(P_1^{(m)}(x_1),\ldots,P_d^{(m)}(x_d)\Bigr)}{ \mathbf{c}^{(m-1)}\Bigl(P_1^{(m-1)}(x_1),\ldots,P_d^{(m-1)}(x_d)\Bigr)}}_\text{Implicit recursion on copulas}\,.
\end{equation}
This decomposition fruitfully isolates updates to marginal predictive densities from updates to their dependence structure, allowing us to model each recursion separately; the marginal predictives follow a univariate recursion \emph{a la} \eqref{eq:Hahn} while the copulas are free to follow another recursive form. Further, as we are only interested in the joint predictive $\mathbf{p}^{(n)}$, once marginal predictives are obtained, we only need to fit a single copula $\mathbf{c}^{(n)}$ at the final iteration $n$ to recover the joint predictive through \eqref{thm:sklar_pdf}. Crucially, we do not need an analytical form for copula updates, leaving them as an implicit recursion on copula densities. Motivated by this insight, we formulate the Quasi-Bayesian Vine as an estimation problem with two parts, the recursive estimation of marginal predictive densities and the estimation of a vine copula to model the high-dimensional dependency.

\paragraph{Marginal predictive density estimation}
We model marginal predictive densities ${p_i}^{(n)}(x_i)$ and distributions ${P_i}^{(n)}(x_i)$, $\forall i=1,\ldots, d$  independently of each other. We use the univariate R-BP approach described in Section~\ref{sec:QBpred} to recursively obtain the analytical expression for both. For each dimension separately, starting with an initial density $p^{(0)}_i$ and distribution $P^{(0)}_i$, we follow the updates \eqref{eq:hahn_dens} for the density and \eqref{eq:hahn} for the distribution.

\paragraph{Simplified vine copulas for high-dimensional dependence} 

After estimating marginal predictives, we model the joint density of $\left(u_1:={P_1}^{(n)}(x_1),\ldots,u_d:={P_d}^{(n)}(x_d)\right)$ with a multivariate copula. We consider a highly flexible vine copula which decomposes the joint copula density $c(u_1,\ldots,u_d)$ into $\frac{d(d-1)}{2}$ bivariate conditional copulas to capture the dependence structure.

Vine copulas are a class of copulas that provide a divide-and-conquer approach to high-dimensional modelling by decomposing the joint copula density into $\frac{d(d-1)}{2}$ bivariate copula terms. The main ingredient of a vine copula decomposition is the  following identity as a consequence of~\eqref{thm:sklar_pdf}:
\begin{equation}
\label{eq:cop_identity}
    p_{a|b}(x_a|x_b) = c_{a,b}(P_a(x_a),P_b(x_b))\cdot p_a(x_a)
\end{equation}
where $a,b$ are subsets of dimensions from $\{1,\ldots,d\}$. Vine copulas rely on a conditional factorisation $\mathbf{p}(x_1,\ldots,x_d)=\prod_{i=1}^d p_{i|<i}(x_i|x_{<i})$ to which they repeatedly apply ~\eqref{eq:cop_identity}, rewriting any conditional densities as copulas, thereby splitting the joint density into the $d$ marginal densities and $\frac{d(d-1)}{2}$ bivariate copulas called pair copulas. The pair copulas for each $ i\neq j \in \{,\ldots,d\}$, take as input pairs of conditional distributions $(P_{i|\mathcal{S}_{ij}}(x_{i|\mathcal{S}_{ij}}),P_{j|\mathcal{S}_{ij}}(x_{j|\mathcal{S}_{ij}}))$ where $\mathcal{S}_{ij}\subseteq\{1,\ldots,d\}\setminus\{i,j\}  \cup \emptyset$ is decided by the choice of the vine. A vine copula model thus has the form 
\begin{equation}
\label{eq:qbvine}
    {\mathbf{c}}(u_1,\ldots,u_d) = \prod_{i\neq j}^{d(d-1)/2} {c}_{ij}(P_{i|\mathcal{S}_{ij}}(x_{i|\mathcal{S}_{ij}}),P_{j|\mathcal{S}_{ij}}(x_{j|\mathcal{S}_{ij}})|\mathcal{S}_{ij}).
\end{equation}
We note that these pair copulas start as unconditional bivariate copulas and later capture higher orders of multivariate dependence by conditioning on the set $\mathcal{S}$ itself. This decomposition is valid but only unique up to the permutation of indexes. We provide an example of a three-dimensional vine copula decomposition and an overview in Appendix \ref{apdx:copulas_vine}, referring the reader to \cite{czado2019analyzing,czado2022vine} for an introduction. In practice, we use a \emph{simplified vine copula} model \citep{nagler2016evading, nagler2017nonparametric} which removes the conditional dependence of each of the copula $c_{ij}$ on $\mathcal{S}_{ij}$. This is an approximation reducing the complexity of the model for dependency structure, 

but provides a significant computation efficiency by reducing the size of tree space and \cite{nagler2016evading} provides a dimension-independent convergence rate when the simplified vine assumption is true, making simplified vine copulas greatly appealing for high-dimensional models. Under this simplified vine copula model assumption, we use nonparametric Kernel Density Estimator (KDE) for each of the bivariate unconditional pair-copulas\footnote{For further detail on KDE-based vine copulas, see Appendices \ref{apdx:copulas_kde}, \ref{apdx:copulas_vine} and \cite{nagler2016evading,nagler2017nonparametric}.}. Then ${c}_{ij}$ becomes a two-dimensional KDE copula with the following expression:
    \begin{equation}
        {c}_{ij}(u,v)\,=\,\frac{\sum_{k=1}^K\phi\Bigl(\Phi^{-1}(u)-\Phi^{-1}(u_{i,k});0,b\Bigr)\cdot \phi\Bigl(\Phi^{-1}(v)-\Phi^{-1}(v_{j,k});0,b\Bigr)}{\phi\Bigl(\Phi^{-1}(u);0,b\Bigr) \cdot \phi\Bigl(\Phi^{-1}(v);0,b\Bigr)}\,
    \end{equation}
    where $\phi(.;0,b)$ is the pdf of a normal with mean $0$ and variance $b$, $\Phi^{-1}$ is the inverse standard normal cdf. Samples $\{(u_{i,k},v_{j,k})\}_{k=1}^K$ are easily obtained by iteratively fitting KDE pair copulas on observed samples $\{(u_{1,k},\ldots,u_{d,k})\}_{k=1}^K$ \citep{czado2019analyzing}.

\paragraph{Choice of Hyperparameters.}
\label{sec:estimation}
We begin by choosing a Cauchy distribution as the initial predictive distribution $P_i^{(0)}$ and the corresponding density $p_i^{(0)}\ \forall i=1,\ldots,d$, with details provided in Appendix \ref{apdx:exp}. The hyperparameter $\rho$ for the R-BP recursion for each marginal predictive is chosen in an automatic data-dependent manner, by minimising the energy score \citep{pacchiardi2021probabilistic} between observations and samples from our model conditional on previous observed data. 
As the R-BP is sensitive to the ordering of the data, we follow \cite{fong2021martingale,ghalebikesabi2022density} by averaging the resulting R-BP marginal over 10 permutations of the data (see \cite{tokdar2009consistency,dixit2019permutation} for a discussion regarding the need of averaging over permutations).
We assume a the same variance parameter $b$ for all the KDE pair copula estimators in the simplified vine and is chosen using 10-fold cross-validation, in a data-dependent manner minimizing the same energy score. The assumption of a common bandwidth $b$ is motivated by mapping all pair copulas to a Gaussian space, which results in a common distance used on the latent pair copula spaces. 
Another hyperparameter is the specific vine decomposition (the grouping of dimensions in \eqref{eq:qbvine}, see Appendix \ref{apdx:copulas_vine}) for which we use a regular vine structure \citep{panagiotelis2017model}, selecting the best pair-copula decomposition with a modified Bayesian Information Criterion suited for vines \citep{nagler2019model}.
\paragraph{Benefits of the energy score}To note, unlike previous R-BP works  
\citep{hahn2018recursive,fong2021martingale,ghalebikesabi2022density} here we minimize the energy score to choose the hyper-parameters rather than the log-score, both of which are strictly proper scoring rules (see Appendix \ref{apdx:SR} and \citep{pacchiardi2021probabilistic}) and define statistical divergences. This choice was motivated by the robustness properties of the energy score \citep{pacchiardi2021generalized}.

Further this choice also provides us some significant computational gain. 
We only need samples from $P_i^{(n)}$ to get an unbiased estimate of the energy score, which can be done very efficiently through inverse probability sampling, hence halving the time to tune hyper-parameters compared to using the log-score, which also requires densities $p_i^{(n)}$. Due to the computational independence along each dimensions ensuing from Sklar's copula decomposition, this selection of $\rho$ (and model fitting) can further be parallelised across dimensions (and permutations of the data) to exploit the power of distributed computing. Similarly, the cross-validation of the copula estimation can also be performed in parallel.

\paragraph{QB-Vine for Regression/classification tasks}
\label{sec:qbvin_supervised}
Our framework can accommodate regression and classification tasks in addition to density estimation, by rewriting the conditional density as following:
\begin{equation}
\label{eq:regression}
    p(y|\mathbf{x})=\frac{p(y,\mathbf{x})}{p(\mathbf{x})}=\frac{p_y(y)\cdot\prod_{i=1}^{d}\Bigl\{p_i(x_i)\Bigr\}\cdot \mathbf{c}(y,x_1,\ldots,x_d)}{\prod_{i=1}^{d} \Bigl\{p_i(x_i)\Bigr\} \cdot \mathbf{c}(x_1,\ldots,x_d)} = \frac{\mathbf{c}(y,x_1,\ldots,x_d)\cdot p_y(y)}{\mathbf{c}(x_1,\ldots,x_d)}.
\end{equation}
The estimation of~\eqref{eq:regression} is comprised of estimating the $d+1$ marginals for $y,x_1,\ldots,x_d$ and the two copulas $\mathbf{c}(y,\mathbf{x})$ and $\mathbf{c}(\mathbf{x})$. We specify separate $\rho$ across marginal densities as well as separate bandwidths for copulas to be estimated with the methods described above. 
We note that for a copula decomposition to be unique, we require that the marginals involved be continuous. This assumption is violated in the case of classification for binary outcomes $y$. As such, we make use of an approximation that transforms $y$ to a continuous scale by setting negative examples to $-10$, and positive examples to $10$ and adds standard Gaussian noise to all examples, breaking ties on a distribution scale (a common approach taken in similar contexts \citep{journel1994posterior,liu2019copula,harris2022generative}). The rationale behind this approximation is that by setting the two classes far apart on a marginal scale, we ensure no overlaps occur, thereby maintaining a clear cut-off between the classes on the distribution scale. Indeed, the separating boundary between the classes on a distribution scale will be the percentile $q=\frac{T_0}{T_1+T_0}$ where $T_0$ and $T_1$ are the numbers of negative and positive samples, respectively, in the training set. Consequently, we create different clusters in the copula $[0,1]^d$ hypercube according to the separation on the distributional scale, facilitating the identification of patterns in the data. We note that other approaches exist in the literature \citep{chen2016copula,chen2017copula,czado2022vine} for classification with copulas which our framework can be extended to.
\paragraph{Approximation error of Quasi-Bayesian Vine}
To quantify the approximation power of the QB-Vine, we provide the following stochastic boundedness \citep{bishop2007discrete} result for univariate R-BP distributions with respect to the limiting \emph{martingale posterior} $P^{(\infty)}$. To the best of our knowledge, this is the first convergence rate result for the R-BP.
\begin{lemma}{\textbf{(R-BP predictive distribution convergence)}}
\label{lemma:qbvine}
The error of the distribution function $P^{(n)}(x)$ in~\eqref{eq:hahn} is stochastically bounded with 
$$
\sup_{x\in\mathcal{X}} \left|P^{(\infty)}(x)-P^{(n)}(x)\right| = \mathcal{O}_p\left(n^{-1/2}\right).
$$
\end{lemma}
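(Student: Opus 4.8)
The plan is to exploit the martingale structure that the recursion~\eqref{eq:hahn} inherits from the martingale-posterior construction: first extract a pointwise $n^{-1/2}$ rate from $L^2$-orthogonality of the increments, and then upgrade it to a bound on the supremum using the fact that each increment depends on $x$ only through a one-dimensional, uniformly bounded-variation functional. \textbf{Step 1 (martingale property and the limit).} Recall that $P^{(\infty)}$ is obtained by continuing the recursion with synthetic observations $x^{m}\mid\mathcal{F}_{m-1}\sim P^{(m-1)}$, so that for each fixed $x$ the sequence $P^{(n)}(x)$ is a bounded $[0,1]$-valued process. Writing $D_m(x):=H_\rho\big(P^{(m-1)}(x),P^{(m-1)}(x^{m})\big)-P^{(m-1)}(x)$, the identity $\int_0^1 H_\rho(v,u)\,du=v$ (valid because $H_\rho(v,\cdot)$ is a conditional copula cdf whose argument is being averaged against the uniform law) gives $\mathbb{E}[D_m(x)\mid\mathcal{F}_{m-1}]=0$. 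Hence $P^{(n)}(x)=P^{(0)}(x)+\sum_{m=1}^{n}\alpha_m D_m(x)$ is a bounded martingale, so it converges a.s. and in $L^2$ to $P^{(\infty)}(x)$, which has a density and is therefore continuous and strictly increasing on its support.

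\textbf{Step 2 (pointwise rate).} Telescoping, $P^{(\infty)}(x)-P^{(n)}(x)=\sum_{m=n+1}^{\infty}\alpha_m D_m(x)$, a sum of martingale differences with $|D_m(x)|\le 1$ and partial sums bounded by $1$. Orthogonality of martingale increments together with dominated convergence gives $\mathbb{E}\big[(P^{(\infty)}(x)-P^{(n)}(x))^2\big]=\sum_{m>n}\alpha_m^2\,\mathbb{E}[D_m(x)^2]\le\sum_{m>n}\alpha_m^2\le 4/n$, using $\alpha_m\le 2/m$. Thus for each fixed $x$ we already have $|P^{(\infty)}(x)-P^{(n)}(x)|=\mathcal{O}_p(n^{-1/2})$, and the $L^2$ bound is uniform in $x$.

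\textbf{Step 3 (uniformization).} The remaining task is to pass from a uniform-in-$x$ second-moment bound to a bound on the second moment of the supremum. The key observation is that $D_m(x)$ depends on $x$ only through $v=P^{(m-1)}(x)\in[0,1]$, via $g_m(v):=H_\rho(v,U_m)-v$ with $U_m:=P^{(m-1)}(x^{m})$; and $g_m$ is bounded by $1$ with total variation at most $2$, uniformly in $U_m$, since $\int_0^1|\partial_v H_\rho(v,u)-1|\,dv\le\int_0^1\partial_v H_\rho(v,u)\,dv+1=2$ (the integrand $\partial_v H_\rho$ is a nonnegative conditional density integrating to $1$). As precomposition with the monotone map $P^{(m-1)}$ does not increase total variation, after reparametrising by $w=P^{(\infty)}(x)$ each increment $x\mapsto\alpha_m D_m(x)$ lies in $\alpha_m$ times a fixed, uniformly bounded, bounded-variation class of functions on $[0,1]$, which has bracketing entropy $\log N_{[\,]}(\varepsilon)\lesssim\varepsilon^{-1}$ and is therefore Donsker. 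A maximal (chaining) inequality for the martingale $w\mapsto\sum_{m>n}\alpha_m g_m(\psi_m(w))$, $\psi_m:=P^{(m-1)}\circ(P^{(\infty)})^{-1}$, over this class then yields $\mathbb{E}\,\sup_{x\in\mathcal{X}}|P^{(\infty)}(x)-P^{(n)}(x)|\lesssim\big(\sum_{m>n}\alpha_m^2\big)^{1/2}=\mathcal{O}(n^{-1/2})$, and Markov's inequality concludes.

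\textbf{Main obstacle.} The difficulty is concentrated in Step 3. The increments are not i.i.d.\ but merely martingale differences, so the empirical-process bound must be run through a martingale version of chaining (for instance a Freedman/Bernstein-type inequality applied along the chaining net), and one must keep track of the reparametrising maps $\psi_m$, which drift with $m$ — this is benign because they are monotone and converge uniformly to the identity by Step 1 together with P\'olya's theorem, but it needs to be said. A cruder route avoids chaining entirely: Azuma--Hoeffding gives a sub-Gaussian tail for $P^{(\infty)}(x)-P^{(n)}(x)$ with variance proxy $\asymp n^{-1}$, which combined with the monotonicity discretisation $\sup_{x}|P^{(\infty)}-P^{(n)}|\le\max_{0\le j\le N}|(P^{(\infty)}-P^{(n)})(x_j)|+1/N$ (with $P^{(\infty)}(x_j)=j/N$, using continuity of $P^{(\infty)}$) and a union bound over $N\asymp n$ points gives only $\mathcal{O}_p(\sqrt{\log n/n})$; eliminating the logarithmic factor to reach the stated $\mathcal{O}_p(n^{-1/2})$ is precisely what the Donsker refinement provides.
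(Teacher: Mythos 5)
Your proof is correct in substance and arrives at the rate by the same underlying mechanism as the paper, but by a genuinely different route. The paper's proof is essentially a rate extraction: it invokes Proposition~1 of \cite{fong2021martingale} as a black box --- a Freedman/Bernstein-type exponential concentration inequality for $|P^{(M)}(x)-P^{(n)}(x)|$ with variance term $\tfrac12\sum_{i>n}\alpha_i^2$ --- sets the tail probability equal to $\delta$, solves the resulting quadratic for $\epsilon_n$, and observes that $\alpha_{n+1}=\mathcal{O}(n^{-1})$ and $\sum_{i>n}\alpha_i^2=\mathcal{O}(n^{-1})$ force $\epsilon_n=\mathcal{O}(n^{-1/2})$. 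You instead rebuild the martingale structure from scratch (your identity $\int_0^1 H_\rho(v,u)\,du=v$ is exactly the conditionally-identically-distributed property that makes $P^{(n)}(x)$ a bounded martingale under the forward-sampling construction of $P^{(\infty)}$) and obtain the same $\sum_{m>n}\alpha_m^2\le 4/n$ variance bound by orthogonality of increments; your Step 2 is the elementary $L^2$/Chebyshev version of what the paper gets from the exponential inequality. What your approach buys is transparency about where the rate comes from; what the paper's buys is that the concentration inequality is already in the literature.

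The one place the two proofs genuinely diverge is the supremum over $x$, and here you are actually more careful than the paper: the paper asserts the cited pointwise concentration holds ``over the supremum of $x\in\mathcal{X}$'' and in its final display even places $\sup_x$ outside the probability, which establishes a uniform-in-$x$ pointwise bound rather than a bound on the supremum. Your Step 3 confronts this honestly --- noting that a naive union bound costs a $\sqrt{\log n}$ factor and that removing it requires a martingale chaining/bracketing argument over the bounded-variation class $\{v\mapsto H_\rho(v,u)-v\}$. That chaining step is only sketched (the Freedman-along-the-net argument with the drifting monotone reparametrisations $\psi_m$ is asserted, not executed), so your Step 3 is incomplete as written; but the gap you would need to fill is precisely the gap the paper's own proof papers over by citation, so your proposal is not weaker than the published argument and is arguably a more honest account of what the lemma requires.
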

 \begin{proof}
     We provide a proof in Appendix \ref{apdx:A_lemma}.
 \end{proof}
In comparison to univariate Kernel Density Estimation with a mean-square optimal bandwidth $b_n =\mathcal{O}(n^{-1/5})$, which convergence at a rate $\mathcal{O}_{a.s.}(n^{-2/5}\sqrt{\text{ln}(n)})$, we can see that the marginal R-BP has a better rate with sample size. In what follows, we assume that the true copula is a simplified vine of which we know the decomposition (a standard assumption in the vine copula litterature \citep{nagler2016evading,czado2022vine,spanhel2019simplified}). We strengthen marginal guarantees with the theory on vine copulas to obtain the following convergence result for the estimate of the copula density. In the statement of the theorem, we consider marginal distributions $\{P_i^{(\infty)}\}_{i=1}^d$ and $\{P_i^{(n)}\}_{i=1}^d$ are implicitly applied to $\mathbf{x}$ for respective copulas.
\begin{theorem}{\textbf{(Convergence of Quasi-Bayesian Vine)}}
\label{thm:qbvine}
    Assuming a correctly identified simplified vine structure for $\mathbf{c}^{(\infty)}(\mathbf{u})$, and using univariate R-BP marginal distributions with a simplified vine copula, the copula estimator error is stochastically bounded $\forall \, \mathbf{x}\in \mathbb{R}^d$ with
 \begin{equation}
     |\mathbf{c}^{(\infty)}(\mathbf{x}) - \mathbf{c}^{(n)}(\mathbf{x})|=\mathcal{O}_p(n^{-r})
 \end{equation} where $n^{-r}$ is the convergence rate of the KDE pair-copula.
\end{theorem}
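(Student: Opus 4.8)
The plan is to decompose the total error into a contribution coming from the R-BP marginal distributions and a contribution coming from the nonparametric pair-copula estimators, and then to show the latter dominates. Write both $\mathbf{c}^{(\infty)}$ and $\mathbf{c}^{(n)}$ in their simplified vine form \eqref{eq:qbvine} as products of $d(d-1)/2$ bivariate pair copulas: for $\mathbf{c}^{(\infty)}$ the arguments are the limiting conditional distributions built recursively from the \emph{true} pair copulas and the limiting marginals $\{P_i^{(\infty)}\}$, while for $\mathbf{c}^{(n)}$ they are built from the KDE pair copulas and the R-BP marginals $\{P_i^{(n)}\}$. Introduce the hybrid object $\tilde{\mathbf{c}}^{(n)}$ that uses the true pair copulas but the R-BP marginal inputs, so that by the triangle inequality
\[
|\mathbf{c}^{(\infty)}(\mathbf{x}) - \mathbf{c}^{(n)}(\mathbf{x})| \le |\mathbf{c}^{(\infty)}(\mathbf{x}) - \tilde{\mathbf{c}}^{(n)}(\mathbf{x})| + |\tilde{\mathbf{c}}^{(n)}(\mathbf{x}) - \mathbf{c}^{(n)}(\mathbf{x})|.
\]

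First I would bound the marginal term $|\mathbf{c}^{(\infty)} - \tilde{\mathbf{c}}^{(n)}|$. This involves only the true pair copulas evaluated at two different sets of arguments; assuming the pair-copula densities are continuously differentiable and bounded on the relevant compact set (standard in the simplified-vine literature), a mean-value bound reduces it to the perturbation in the conditional arguments. Those arguments are obtained by composing the $h$-functions of the true pair copulas with the marginals; by Lemma~\ref{lemma:qbvine} each marginal input is perturbed by $\mathcal{O}_p(n^{-1/2})$ uniformly, and since the $h$-functions are themselves Lipschitz, an induction over the $d-1$ trees of the vine (a fixed number) propagates this to $|\mathbf{c}^{(\infty)} - \tilde{\mathbf{c}}^{(n)}| = \mathcal{O}_p(n^{-1/2})$.

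Next I would bound the estimation term $|\tilde{\mathbf{c}}^{(n)} - \mathbf{c}^{(n)}|$, which holds the marginal inputs fixed at the R-BP values and replaces true pair copulas by their transformation-kernel estimates tree by tree. Here I would invoke the pointwise convergence rate $\mathcal{O}_p(n^{-r})$ of the KDE pair-copula estimator together with the result of \cite{nagler2016evading} that, under a correctly specified simplified vine, this rate is inherited by the full vine copula estimator — i.e. the hierarchical composition of pair-copula estimators and estimated $h$-functions does not degrade the rate, because each estimated $h$-function converges at least as fast as the pair copulas and $d$ is fixed. A telescoping sum over the $d(d-1)/2$ factors, each bounded away from $0$ and $\infty$ on the compact evaluation region, turns the product into a sum of $\mathcal{O}_p(n^{-r})$ terms. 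Combining the two bounds gives $|\mathbf{c}^{(\infty)}(\mathbf{x}) - \mathbf{c}^{(n)}(\mathbf{x})| = \mathcal{O}_p(n^{-1/2}) + \mathcal{O}_p(n^{-r}) = \mathcal{O}_p(n^{-r})$, since the nonparametric bivariate KDE rate satisfies $r \le 1/2$ (typically $r=1/3$ with a mean-square-optimal bandwidth), so the KDE term dominates.

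The main obstacle is the middle step: rigorously controlling how both the $n^{-1/2}$ marginal perturbation and the $n^{-r}$ pair-copula errors propagate up the vine, since the conditional arguments at tree $t$ are nonlinear functionals ($h$-functions) of the estimated objects at trees $1,\dots,t-1$. This requires uniform smoothness and boundedness of all pair copulas and their $h$-functions on the relevant sets, and care that the stochastic $\mathcal{O}_p$ bounds compose correctly through finitely many Lipschitz maps; the fixed dimension $d$ is precisely what keeps the number of such compositions bounded and hence the rate intact.
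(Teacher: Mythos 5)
Your proposal is correct and follows essentially the same route as the paper: both arguments adapt the three-part induction of \cite{nagler2016evading} over the vine trees, feed in Lemma~\ref{lemma:qbvine} as the $\mathcal{O}_p(n^{-1/2})$ input for the marginal distribution functions, propagate the perturbations through the $h$-functions by smoothness, and conclude that the bivariate KDE rate $n^{-r}$ (with $r\le 1/2$) dominates. The only cosmetic difference is the direction of your intermediate object (true pair copulas at estimated arguments, versus the paper's estimated pair copulas at the true unobserved conditional samples), which does not change the substance of the argument.
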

\begin{proof}
    We provide a proof in Appendix \ref{apdx:A_thm}.
\end{proof}
For a bivariate KDE pair-copula estimator with optimal bandwidth $b_n=\mathcal{O}\left(n^{-1 / 6}\right) $, we obtain $ n^{-r} = n^{-1 / 3}$  \citep{nagler2016evading}. From \cite{stone1980optimal}, we note the optimal convergence rate of a nonparametric estimator is $n^{-p/(2p+d)}$ where $p$ is the number of times the estimator is differentiable. Therefore, as $d$ increases, we expect large benefits from using a vine copula decomposition for the QB-Vine. When the simplifying assumption does not hold, the vine copula converges to a partial vine approximation of the true copula, as defined in \cite{spanhel2019simplified}. Together, these two results guarantee accurate samples from $\mathbf{P}^{(n)}(\mathbf{x})$ by inverse probability sampling arguments. By Theorem \ref{thm:qbvine}, the copula $\mathbf{c}^{(n)}$ ensures samples $\mathbf{u}=(u_1,\ldots,u_d)$ on the $[0,1]^d$ hypercube have a dependence structure representative of the data. Then, marginal distributions $\{P_i^{(n)}\}_{i=1}^d$ recover dependent samples $\mathbf{x} \in\mathbb{R}^d$ by evaluating the inverse of the distribution at $u_i$ dimension-wise.

\section{Related Work}
\label{sec:related}

Our method shares similarities with existing work on QB predictive density estimation with analytical forms. The pivotal works of \cite{newton1998nonparametric,newton2002nonparametric} and the ensuing Predictive Recursion (PR) \citep{ghosh2006convergence,martin2008stochastic,martin2009asymptotic,tokdar2009consistency,martin2012convergence,ghosal2017fundamentals,martin2021survey} propose a recursive solution to the same problem but are restrained to low dimensional settings due to the numerical integration of a normalising constant over a space scaling with $d$. A sequential importance sampling strategy for PR is proposed in \cite{dixit2023prticle} termed as PRticle Filter. The R-BP of \cite{hahn2018recursive} and the multivariate extensions in \cite{fong2021martingale,ghalebikesabi2022density} also have a recursive form driven by bivariate copula updates. In the multivariate case, imposing assumptions on the kernel structure leads to a conditional factorisation of the joint predictive which recovers bivariate copula updates. In \cite{ghalebikesabi2022density}, an autoregressive Bayesian predictive (AR-BP) is used, where the dependence is captured by dimension-wise similarity functions modelled with kernels or deep autoregressive networks. The former relies on assumptions that might be too simplistic to capture complex data while the latter loses the appeal of a data-efficient predictive like in the R-BP. The Quasi-Bayesian Vine retains the advantages of the bivariate copula-based recursion for marginal predictives and circumvents the need for assumptions on the DPMM kernel. We achieve this via approximating the high-dimensional dependency through a simplified vine copula which is highly flexible and does not use a deep network to preserve data-efficiency, all the while maintaining an analytical expression. A relevant benchmark are the NFs of \cite{papamakarios2017masked,durkan2019neural} with analytical densities with a solid performance across data types and tasks.

\section{Experiments}
\label{sec:exp}

\begin{table*}[t]

\caption{Average log predictive score (lower is better) with error bars corresponding to
two standard deviations over five runs for density estimation on datasets analysed by \citet{ghalebikesabi2022density}. We note that as dimension increases, the QB-Vine outperforms all benchmarks.} 

\begin{tabularx}{\textwidth}{p{2.5cm}ccccc}
\toprule
{} & WINE & BREAST & PARKIN & IONO & BOSTON \\
{$n$/$d$} & 89/12 & 97/14 & 97/16 &  175/30 & 506/13 \\
\midrule
KDE            & ${13.69_{\pm 0.00}}$ & ${10.45_{\pm 0.24}}$ & ${12.83_{\pm 0.27}}$ & ${32.06_{\pm 0.00}}$ & ${8.34_{\pm 0.00}}$ \\
DPMM (Diag)    & ${17.46_{\pm 0.60}}$ & ${16.26_{\pm 0.71}}$ & ${22.28_{\pm 0.66}}$ & ${35.30_{\pm 1.28}}$ & ${7.64_{\pm 0.09}}$ \\
DPMM (Full)    & ${32.88_{\pm 0.82}}$ & ${26.67_{\pm 1.32}}$ & ${39.95_{\pm 1.56}}$ & ${86.18_{\pm 10.22}}$ & ${9.45_{\pm 0.43}}$ \\
MAF            & ${39.60_{\pm 1.41}}$ & ${10.13_{\pm 0.40}}$ & ${11.76_{\pm 0.45}}$ & ${140.09_{\pm 4.03}}$ & ${56.01_{\pm 27.74}}$ \\
RQ-NSF         & ${38.34_{\pm 0.63}}$ & ${26.41_{\pm 0.57}}$ & ${31.26_{\pm 0.31}}$ & ${54.49_{\pm 0.65}}$ & ${-2.20_{\pm 0.11}}$ \\
PRticle Filter & ${23.89_{\pm 0.93}}$ & ${25.98_{\pm 1.06}}$ & ${34.79_{\pm 3.95}}$ & ${79.22_{\pm 9.87}}$ & ${27.18_{\pm 3.12}}$ \\
R-BP           & ${13.57_{\pm 0.04}}$ & ${7.45_{\pm 0.02}}$ & ${9.15_{\pm 0.04}}$ & ${21.15_{\pm 0.04}}$ & ${4.56_{\pm 0.04}}$ \\
R$_d$-BP       & ${13.32_{\pm 0.01}}$ & ${6.12_{\pm 0.05}}$ & ${7.52_{\pm 0.05}}$ & ${19.82_{\pm 0.08}}$ & ${-13.50_{\pm 0.59}}$ \\
AR-BP          & ${13.45_{\pm 0.05}}$ & ${6.18_{\pm 0.05}}$ & ${8.29_{\pm 0.11}}$ & ${17.16_{\pm 0.25}}$ & ${-0.45_{\pm 0.77}}$ \\
AR$_d$-BP      & $\mathbf{13.22_{\pm 0.04}}$ & ${6.11_{\pm 0.04}}$ & ${7.21_{\pm 0.12}}$ & ${16.48_{\pm 0.26}}$ & ${-14.75_{\pm 0.89}}$ \\
ARnet-BP       & ${14.41_{\pm 0.11}}$ & ${6.87_{\pm 0.23}}$ & ${8.29_{\pm 0.17}}$ & ${15.32_{\pm 0.35}}$ & ${-5.71_{\pm 0.62}}$ \\
\midrule
QB-Vine        & ${13.76_{\pm 0.13}}$ & $\mathbf{4.67_{\pm 0.31}}$ & $\mathbf{4.93_{\pm 0.20}}$ & $\mathbf{-16.08_{\pm 2.12}}$ & $\mathbf{-31.04_{\pm 1.02}}$ \\
\bottomrule
\end{tabularx}
\label{tab:density}
\end{table*}

In this section, we compare our QB-Vine model against competing methods supporting density evaluation with a closed-form expression. Further details on the experiments are included in Appendix \ref{apdx:exp}, with an ablation study and comparison with sample size and dimension. Code is included in the supplementary material. 
\paragraph{Density estimation}
\label{exp:dens}
We evaluate the QB-Vine on density estimation benchmark UCI datasets \citep{asuncion2007uci} with small sample sizes ranging from $89$ to $506$ and dimensionality varying from $12$ to $30$, adding results for the QB-Vine and PRticle Filter to the experiments of \cite{ghalebikesabi2022density}. We report the log predictive score LPS$=\frac{1}{n_{test}}\sum_k^{n_{test}} -\mathbf{p}^{(n_{train})}(\mathbf{x}_k)$ 
on a held-out test dataset of size $n_{test}$ comprised of half the samples with the other half used for training, averaging results over five runs with random partitions each time. We compare the QB-Vine against the following models: Kernel Density Estimation \citep{parzen1962estimation}, DPMM \citep{rasmussen1999infinite} with a diagonal (Diag) and full (Full) covariance matrix for
 each mixture component, MAF \citep{papamakarios2017masked}, RQ-NSF \citep{durkan2019neural} as well as the closely related PRticle Filter \citep{dixit2023prticle}, R-BP \citep{fong2021martingale} and AR-BP \citep{ghalebikesabi2022density}. For the last two Bayesian predictive models, we add a subscript $d$ to indicate that the $\rho$ hyperparameter possibly differs across dimensions, and the $net$ suffix indicates a network-based selection of $\rho$ for dimensions. We observe in Table \ref{tab:density} that our QB-Vine method comfortably outperforms all competitors as the dimension increases, matching the performance of other Bayesian predictive models for the lower dimensional {WINE} dataset. Our method's relative performance increases with the dimension of the data, particularly achieving a much smaller LPS 
 for {IONO} - the dataset with the largest dimensions and a relatively small sample size. We accredit this performance to the copula decomposition as that is our main distinguishing factor from the Bayesian predictive models.

 \paragraph{High-dimensional image dataset}
\begin{figure}
  \begin{center}
    \includegraphics[width=0.48\textwidth]{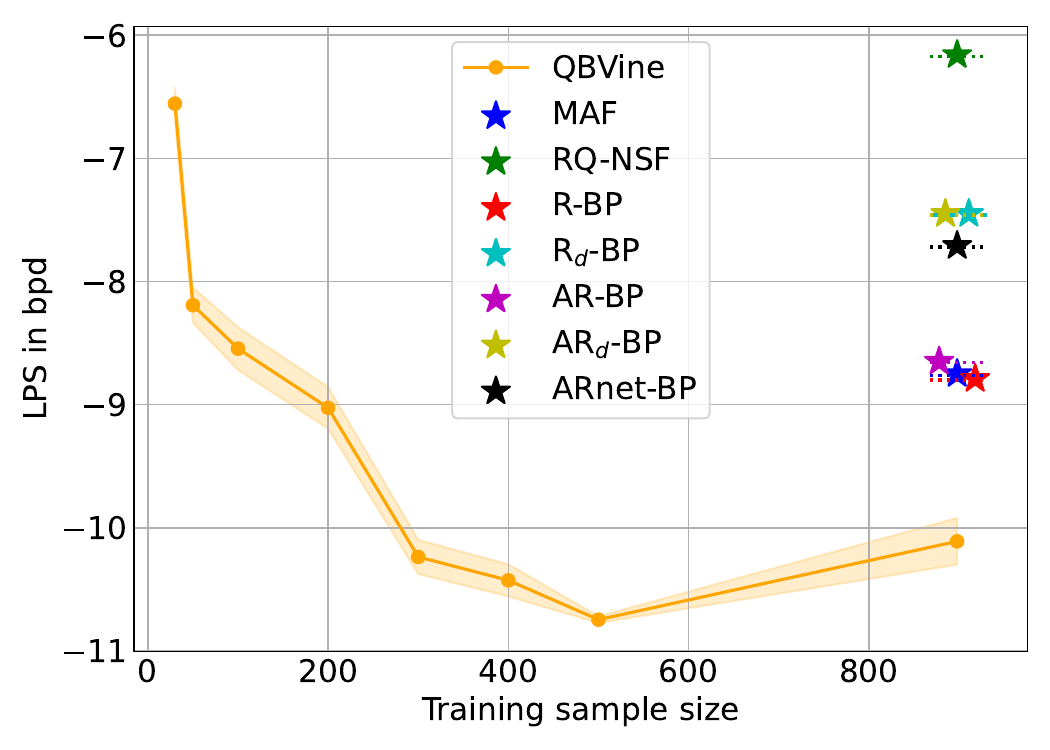}
  \end{center}
  \caption{Density estimation on the Digits data ($n=1797,d=64$) with reduced training sizes for the QB-Vine against other models fitted on the full training set. The QB-Vine achieves competitive performance for training sizes as little as $n=50$ and outperforms all competitors once $n>200$.}
  \label{fig:digits}
\end{figure}
We further evaluate the QB-Vine on the digits dataset ($n=$1797, $d$=64) as a high-dimensional example with a relatively low sample size. The high contrast between $n$ and $d$ makes the problem suited for assessing the data efficiency and convergence of the QB-Vine. We compare with the two NF models as their high model capacity is a good fit for image data, as well as all the Bayesian predictive methods of Section \ref{exp:dens}, from the study of \cite{ghalebikesabi2022density}. We report the average LPS in bits per dimension (bpd) with standard errors over five runs with random partitions, using half the sample size to train models and the other half to evaluate the LPS. Additionally, we report the average LPS of the QB-Vine, obtained in the same way except for the training set size being reduced (to $30,50,100,200,300,400,500$). 
Figure \ref{fig:digits} depicts the QB-Vine's performance for different-sized training sets. When trained on the full train set, the QB-Vine outperforms all competitors by a considerable margin. Furthermore, our method is competitive with as little as 50 training samples and outperforms all benchmarks past a training size of 200, demonstrating its data-efficiency and convergence speed.  A complete numerical table is reported in Appendix \ref{apdx:exp}.

\begin{table*}[t]
\centering
\caption{Average LPS (lower is better) with error bars corresponding to
two standard deviations over five runs for supervised tasks analysed by \citet{ghalebikesabi2022density}. The QB-Vine performs favourably against benchmarks, with relative performance improving as samples per dimension decrease.}
\begin{tabularx}{\textwidth}{p{2.5cm}cccp{0.00005\textwidth}ccc}
\toprule
& \multicolumn{3}{c}{Regression} & & \multicolumn{2}{c}{Classification}\\
{} & BOSTON  & CONCR & DIAB & & IONO & PARKIN \\
{$n$/$d$ } & 506/13 & 1,030/8& 442/10 & & 351/33 & 195/22 \\
\cmidrule(lr){2-4} \cmidrule(lr){6-7}
Linear    &     ${0.87_{\pm 0.03}}$ &      ${0.99_{\pm 0.01}}$ &     ${1.07_{\pm 0.01}}$ & & ${0.33_{\pm 0.01}}$ &   ${0.38_{\pm 0.01}}$   \\
GP        &     ${0.42_{\pm 0.08}}$ &      ${0.36_{\pm 0.02}}$ &     ${1.06_{\pm 0.02}}$ & &    ${0.30_{\pm 0.02}}$ &     ${0.42_{\pm 0.02}}$  \\
MLP       &    ${1.42_{\pm 1.01}}$   &     ${2.01_{\pm 0.98}}$ &     ${3.32_{\pm 4.05}}$ & &   ${0.26_{\pm 0.05}}$ &     ${0.31_{\pm 0.02}}$ \\ 
R-BP      &     ${0.76_{\pm 0.09}}$ &      ${0.87_{\pm 0.03}}$ &     ${1.05_{\pm 0.03}}$ & &    ${0.26_{\pm 0.01}}$ &     ${0.37_{\pm 0.01}}$  \\
R$_d$-BP  &     ${0.40_{\pm 0.03}}$ &      ${0.42_{\pm 0.00}}$ &     ${1.00_{\pm 0.02}}$ & &    ${0.34_{\pm 0.02}}$ &     ${0.27_{\pm 0.03}}$ \\
AR-BP     &     ${0.52_{\pm 0.13}}$ &      ${0.42_{\pm 0.01}}$ &     ${1.06_{\pm 0.02}}$ & &    ${0.21_{\pm 0.02}}$ &     ${0.29_{\pm 0.02}}$ \\
AR$_d$-BP &  ${0.37_{\pm 0.10}}$ &      ${0.39_{\pm 0.01}}$ &  ${0.99_{\pm 0.02}}$ & & ${0.20_{\pm 0.02}}$ &     ${0.28_{\pm 0.03}}$\\
ARnet-BP  &     ${0.45_{\pm 0.11}}$ &  $\mathbf{-0.03_{\pm 0.00}}$ &     ${1.41_{\pm 0.07}}$ & &    ${0.24_{\pm 0.04}}$ &  ${0.26_{\pm 0.04}}$  \\
\midrule
QB-Vine &     $\mathbf{-0.81_{\pm1.26 }}$ &  ${0.54_{\pm 0.34}}$ &     $\mathbf{0.87_{\pm 0.20}}$ & &    $\mathbf{-1.85_{\pm 1.16}}$ &  $\mathbf{-0.76_{\pm 0.28}}$  \\
\bottomrule 
\end{tabularx}
    \label{tab:preduci}
\end{table*}

\paragraph{Regression and classification}
We further demonstrate our method's effectiveness on supervised learning tasks, with three datasets for regression and two datasets for classification, adding to the study of \cite{ghalebikesabi2022density}. For classification, we transform the binary values to continuous ones to preserve copula assumptions, as detailed in Section \ref{sec:qbvin_supervised}. We report the conditional LPS
$=\frac{1}{n_{test}}\sum_k^{n_{test}} -\mathbf{p}^{(n_{train})}(y_k|\mathbf{x}_k)$

over a test set of size $n_{test}$ made up of half the samples with the conditional estimator trained on the other half of the data. We compare our model
 against a Gaussian Process \citep{williams1995gaussian}, a linear Bayesian model (Linear) \citep{minka2000bayesian}, a one-hidden-layer multilayer perceptron (MLP), as well the R-BP and AR-BP variants for supervised tasks \citep{fong2021martingale,ghalebikesabi2022density}. The QB-Vine outperforms competing methods on all datasets except {CONCR}. We believe the lower performance on {CONCR} is due to the high number of samples relative to the dimension, preventing our approach from fully exploiting the vine copula decomposition. Once again, the performance of the QB-Vine more clearly exceeds that of competitors as dimensions increase. The QB-Vine has higher standard errors 
than other methods (except MLP), which we posit is the consequence of our conditional estimator in Section \ref{sec:qbvin_supervised} being defined as a ratio, inflating the variation in the LPS. However, we highlight that an overly precise inference is more misleading/dangerous than an overly uncertain one.

\section{ Discussion}
\label{sec:discussion}

We introduced the Quasi-Bayesian Vine, a joint Bayesian predictive density estimator with an analytical form and easy to sample. This extends the existing works on Quasi-Bayesian predictive densities, by using Sklar's theorem to decompose the predictive density into predictive marginals and a copula to model the high-dimensional dependency. This decomposition also provides us a way for two-part estimation procedure, employing recursive density estimation as in the Quasi-Bayesian works for the marginals and fitting a simplified vine copula for the dependence, resulting in a convergence rate independent of dimension for certain joint densities. We empirically demonstrate the advantage of QB-Vine on a range of datasets compared to other benchmark methods, showing excellent modeling capabilities in large  dimensions with only a few training data.

However, there is potential for further improvements. The non-uniqueness of the vine decomposition results in a search over a vast model space during estimation. In addition, the hyperparameter selection of the KDE pair copulas could be ameliorated. Finally, the main assumption is the use of a simplified vine copula which is an approximation to the true distribution. From a practical point of view, however, it offers fast computations and outperforms competitors as shown in experiments. Other, future directions of this work include incorporation of more effective copula models, or copulas accommodating different dependence structures \citep{vatter2018generalized,nagler2022stationary,huk2023probabilistic}. Another exciting direction are developments of new recursive Quasi-Bayes methods that can be merged into a Quasi-Bayesian Vine model \citep{garelli2024asymptotics}.

\bibliographystyle{plainnat}
\bibliography{qb_vine}


\appendix

\section{Copulas}
\label{apdx:copulas}

Copulas are a widely adopted tool in statistics and machine learning for modelling densities, permitting the construction of joint densities in a two-step estimation process. One firstly estimates the marginal densities as if they were independent of each other, and secondly, models the \emph{copula} which accounts for the dependence between dimensions. This approach is motivated through Sklar's theorem:
\begin{theorem}[Sklar \cite{sklar}]
    Let $\mathbf{P}$ be an $d$-dimensional distribution function with continuous marginal distributions $P_1, P_2, \ldots, P_d$. Then there exists a copula distribution $\mathbf{C}$ such that for all $\mathbf{x} = (x_1, x_2, \ldots, x_d) \in \mathbb{R}^d$:
\begin{equation}
\mathbf{P}(x_{1}, \ldots, x_{d}) = \mathbf{C}(P_{1}(x_{1}), \ldots, P_{n}(x_{d}))
\end{equation}
And if a probability density function (pdf) is available:
\begin{equation}
\mathbf{p}(x_{1}, \ldots, x_{d}) = p_{1}(x_{1}) \cdot \ldots \cdot p_{d}(x_{d}) \cdot \mathbf{c}(P_{1}(x_{1}), \ldots, P_{d}(x_{d}))
\end{equation}
where $p_{1}(x_{1}), \ldots, p_{d}(x_{d})$ are the marginal pdfs, and $\mathbf{c}(P_{1}(x_{1}), \ldots, P_{d}(x_{d}))$ is the copula pdf.
\end{theorem}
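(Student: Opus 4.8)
The statement to prove is Sklar's theorem in the form given at the end of the excerpt (Appendix A): existence of a copula distribution $\mathbf{C}$ representing a $d$-dimensional distribution $\mathbf{P}$ with continuous marginals $P_1,\ldots,P_d$, together with the associated density factorisation. The plan is to construct $\mathbf{C}$ explicitly using the probability integral transform, which is available because the marginals are assumed continuous. First I would define the candidate copula by
\begin{equation}
\mathbf{C}(u_1,\ldots,u_d) = \mathbf{P}\bigl(P_1^{-1}(u_1),\ldots,P_d^{-1}(u_d)\bigr),\qquad (u_1,\ldots,u_d)\in[0,1]^d,
\end{equation}
where $P_i^{-1}(u) = \inf\{x : P_i(x)\ge u\}$ is the generalised inverse (quantile function). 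The continuity of each $P_i$ guarantees that $P_i(P_i^{-1}(u)) = u$ for all $u\in(0,1)$, which is the key identity that makes this construction work.

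The main body of the argument has three steps. First I would verify that $\mathbf{C}$ so defined is genuinely a copula, i.e. a multivariate distribution function on $[0,1]^d$ with uniform margins: the margins follow because $\mathbf{P}$ restricted to one coordinate (sending the others to $+\infty$) equals $P_i$, and continuity gives $P_i(P_i^{-1}(u))=u$, so the $i$-th margin of $\mathbf{C}$ is the identity on $[0,1]$; the $d$-increasing property and groundedness are inherited directly from those of $\mathbf{P}$ under the coordinatewise monotone maps $u_i\mapsto P_i^{-1}(u_i)$. Second, I would establish the representation itself by evaluating $\mathbf{C}$ at $u_i = P_i(x_i)$ and showing
\begin{equation}
\mathbf{C}(P_1(x_1),\ldots,P_d(x_d)) = \mathbf{P}(x_1,\ldots,x_d)
\end{equation}
for all $\mathbf{x}\in\mathbb{R}^d$; this requires the standard lemma that, for continuous $P_i$, one has $\mathbf{P}(\ldots,P_i^{-1}(P_i(x_i)),\ldots) = \mathbf{P}(\ldots,x_i,\ldots)$ even when $P_i^{-1}(P_i(x_i))\neq x_i$, because the two points cannot be separated by positive $P_i$-mass. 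Third, for the density statement I would differentiate the distributional identity: applying the multivariate chain rule to $\mathbf{P}(\mathbf{x}) = \mathbf{C}(P_1(x_1),\ldots,P_d(x_d))$ and using $\tfrac{d}{dx_i}P_i(x_i) = p_i(x_i)$ yields
\begin{equation}
\mathbf{p}(x_1,\ldots,x_d) = \mathbf{c}\bigl(P_1(x_1),\ldots,P_d(x_d)\bigr)\cdot\prod_{i=1}^{d} p_i(x_i),
\end{equation}
where $\mathbf{c} = \partial^d \mathbf{C}/\partial u_1\cdots\partial u_d$ is the copula density, which exists wherever $\mathbf{p}$ and the $p_i$ do.

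The delicate step is the second one: showing that the quantile transform faithfully recovers $\mathbf{P}$ at points $x_i$ where $P_i$ is flat (so that $P_i^{-1}(P_i(x_i))$ may differ from $x_i$). The resolution is that continuity of $P_i$ forces any such flat interval to carry zero marginal mass, hence zero joint mass in that coordinate direction, so $\mathbf{P}$ takes identical values at $x_i$ and at $P_i^{-1}(P_i(x_i))$; this is where the continuity hypothesis is genuinely used and is also what makes $\mathbf{C}$ unique on the product of the ranges of the margins. I would remark that uniqueness of $\mathbf{C}$ on $\mathrm{Ran}(P_1)\times\cdots\times\mathrm{Ran}(P_d)$ (here all of $[0,1]^d$ by continuity) follows because the representation pins down $\mathbf{C}$ on a dense set, though the theorem as stated only asserts existence, so a full uniqueness proof is optional. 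An alternative to the explicit construction, which I would mention as a fallback, is the distributional transform approach that handles general (possibly discontinuous) margins; but since continuity is assumed here, the direct quantile-inverse construction is the cleanest route.
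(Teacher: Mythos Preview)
Your proof plan is the standard and correct route to Sklar's theorem: define $\mathbf{C}$ via the quantile transform, check it is a copula, recover the representation, and differentiate for the density identity. The handling of flat spots via the zero-mass argument is exactly the right observation for where continuity is used.

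However, there is nothing in the paper to compare against. The theorem appears in Appendix~A as a cited result attributed to Sklar~\cite{sklar}, with no proof supplied; the paper simply states it and moves on to use the decomposition. So your proposal is not an alternative to the paper's argument but rather a proof the paper elects to omit entirely. If the goal is to match the paper, a one-line citation suffices; if the goal is to supply what the paper leaves out, your plan is sound and complete as written.
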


If the marginal distributions are absolutely continuous, the copula is unique. Consequently, one can decompose the estimation problem of learning $p(\mathbf{x})$ by first learning all the marginals $\{p_i\}_{i=1}^d$, and in a second step learning an appropriate copula model $c(u_1,\ldots ,u_d)$, where $u_i:=P_i(x_i) , i\in\{1,\ldots d\}$  are the images of the $x_i$ under the cdf of each dimension. By applying cdf transformations marginally, the copula is agnostic to the differences between dimensions such as axis scaling, and purely focuses upon capturing the dependence structure among them. \\

Most parametric copula models are only suited for two-dimensional dependence modelling and greatly suffer from the curse of dimensionality \citep{fischer2009empirical,hofert2013archimedean}. The Gaussian copula is a popular parametric choice as it is well-studied and can be fitted quickly even to moderate dimensions \citep{hofert2018elements}. However, it lacks the desired flexibility to capture more complex dependencies involving multiple dimensions. Among nonparametric copulas, Kernel Density Estimator (KDE) copulas \citep{geenens2017probit} are commonly used. They apply an inverse Gaussian distribution to the observed $\{u_i\}_{i=1}^d$ to map them to a latent space and perform regular KDE on the latent density. However, this KDE copula method suffers from the poor scaling of KDE estimators in higher dimensions. Finally, deep learning copula models remain a nascent line of research and typically are more computationally expensive and sample-dependent due to their reliance on large models, with only a handful of candidate solutions such as \cite{ling2020deep,kamthe2021copula,janke2021implicit,ashok2024tactis}. As such, current copula models are mostly limited to low to medium-dimensional modelling \citep{joe2014dependence}.

\subsection{Gaussian copula}
\label{apdx:copulas_gauss}
A popular parametric copula model is the Gaussian copula. It assumes that the dependence between dimensions is identical to that of a Gaussian distribution with mean $\mathbf{0}$ and covariance matrix $\Sigma$:
\begin{equation}
    c(u_1,\ldots, u_d) = \frac{\mathcal{N}_d(\Phi^{-1}(u_1),\ldots,\Phi^{-1}(u_d);\mathbf{0},\Sigma)}{\prod_{i=1}^d \mathcal{N}(\Phi^{-1}(u_i);0,1)}.
\end{equation}
As such its only parameters are the off-diagonal entries of the covariance matrix $\Sigma$. In the case of $d=2$, there is a single parameter to estimate for the Gaussian copula.

\subsection{Gaussian Kernel Density Estimator copulas}
\label{apdx:copulas_kde}
As the marginal distribution of a copula is uniform in $[0,1]$, the support of the copula estimator is restricted to $[0,1]^d$ and must satisfy the uniform marginal condition. It is fairly difficult to build such an estimator that fulfills both desiderata with high expressivity. Gaussian Kernel Density Estimation for copulas \citep{charpentier2007estimation,geenens2017probit} is a popular approach for such nonparametric copula models, which models the copula on a latent Gaussian space. We explain the approach in the following.\\

By the inverse sampling theorem, for any $\mathbb{R}$-valued continuous random variable $X$, applying the corresponding cumulative distribution function $F$ to $X$ results in $F(X)$ being uniformly distributed in $[0,1]$. Thus, we can transform a uniform random variable into any continuous distribution using its inverse cumulative distribution function. In the copula estimation stage, samples from the copula already have uniform marginal distributions, meaning we can apply any inverse distribution $F^{-1}$ to each marginal sample value and obtain a corresponding latent marginal distribution. If $F^{-1}$ is the inverse standard normal distribution, then the latent distribution for each marginal will be normal and $\mathbb{R}$-valued with no uniformity restrictions.\\

Gaussian KDE for copulas applies inverse standard normal distributions to each marginal, resulting in a latent representation of the samples on a Gaussian space. As such, one can employ regular Gaussian KDE to estimate the copula density on this latent space. In the case of two-dimensional copulas, the ensuing estimator has the following expression:
\begin{equation}
        \hat{c}(u,v)\,=\,\frac{\sum_{k=1}^K\phi\Bigl(\Phi^{-1}(u)-\Phi^{-1}(u_{k});0,b\Bigr)\cdot \phi\Bigl(\Phi^{-1}(v)-\Phi^{-1}(v_{k});0,b\Bigr)}{\phi\Bigl(\Phi^{-1}(u);0,b\Bigr) \cdot \phi\Bigl(\Phi^{-1}(v);0,b\Bigr)}\, ,
    \end{equation}
    
where $(u,v)$ and $(u_k,v_k)$ are both in $[0,1]^2$,$\{(u_k,v_k)\}_{k=1}^K$ are observed copula samples, and $\Phi$ and $\phi$ are respectively the Gaussian distribution and density with mean $0$ and variance $b>0$.

\subsection{Vine copulas}
\label{apdx:copulas_vine}
A vine copula is an efficient dependency modelling method which decomposes $d$-dimensional copula estimation into $d(d-1)/2$ bivariate copula estimation via structured conditioning \citep{bedford2001probability}. Here we illustrate the decomposition by a $3$-dimensional copula density $c_{U,V,W}$ for the random vector ($U,\,V,\,W$): 
\begin{equation}
    c_{U,V,W}(u,v,w)\,=\,c_{U,V}(u,v)\cdot c_{V,W}(v,w)\cdot c_{U,W|V}\bigg(C_{U|V}(u|\,v),\,C_{W|V}(w|\,v)\bigg|\,v\bigg)\, ,
\end{equation}
where 
\begin{itemize}
    \item $C_{U,V}$ is the copula of $(U,\,V)$, $C_{V,W}$ is the copula of $(V,\,W)$;
    \item $C_{U,W|V=v}$ is the copula of $(C_{U|V}(U|\,v),C_{W|V}(W|\,v))$ conditional on $V=v$;
    \item $C_{U|V}$ is the conditional distribution of $C_{U,V}$ on $V$, and $C_{W|V}$ is the conditional distribution of $C_{V,W}$ on $V$.
\end{itemize}
Generally, the distribution of $C_{U,W|V=v}$ will change with different values $v$ and this will make the model relatively complex. Therefore, it is common to use the simplifying assumption by ignoring the conditioning of pair copulas and simply model them as unconditional bivariate densities:
\begin{equation}
    c_{U,V,W}(u,v,w)\,=\,c_{U,V}(u,v)\cdot c_{V,W}(v,w)\cdot c_{U,W|V}\bigg(C_{U|V}(u|\,v),\,C_{W|V}(w|\,v)\bigg)\, .
\end{equation}
The rationale of the simplified assumption is studied in \cite{haff2010simplified}. In this paper, we mainly focus on the regular vine copula (R-vine) with a simplified assumption. The construction of an R-vine copula has two basic ingredients: (1) a valid tree structure of all random variables, (2) the choice of family for bivariate copulas.\\

Before we introduce the tree structure which is valid to construct a R-vine copula, let us first rigorously define the random variable of a copula we want to estimate. Suppose $\mathbf{U=(U_1,U_2,...,U_d)}$ is a $d$-dimensional random variable which is distributed as a copula $C$, then we have that each marginal random variable $U_i$ is uniformly distributed in $[0,1]$-scale. For notational simplicity, we will use the index of a random variable as its notation instead. Denote $\mathcal{T}=\{\mathcal{T}_1,...,\mathcal{T}_{d-1}\}$ as a sequence of $(d-1)$ trees in terms of $\mathcal{T}_i=(\mathcal{V}_i,\mathcal{E}_i)$. Here we use a set of two nodes to represent the corresponding edge in $\mathcal{T}_i$, i.e., $e=(a,b)$ if node $a$ and $b$ in $\mathcal{E}_i$ are linked. To construct a valid R-vine copula, $\mathcal{T}$ satisfies the following conditions:
\begin{itemize}
    \item $\mathcal{T}_1$ is a tree with a set of edges $\mathcal{E}_1$ and a set of nodes $\mathcal{V}_1=\{1,...,d\}$;
    \item $\mathcal{T}_i$ is a tree with a set of edges $\mathcal{E}_i$ and a set of nodes $\mathcal{V}_i=\mathcal{V}(\mathcal{E}_{i-1})$ for $i=2,3,...,d-1$, where $\mathcal{V}(E)$ denoted that the pair of nodes which are linked by an edge in $E$ is treated as a new node.
    \item If two nodes in $\mathcal{E}_{i+1}$ are linked by an edge in $\mathcal{V}_{i+1}$, then they must be linked to one common node in $\mathcal{T}_i$. 
\end{itemize}
For $\forall\,e = (a,b) \in \mathcal{T}_i$ with $i\geq 2$, we define 
\begin{equation}
    c(e)=a\cap b, \quad\quad a(e)=a\setminus c(e), \quad\quad b(e)=b\setminus c(e)\, .
\end{equation}

Finally, we rigorously define the R-vine copula as follows.
\begin{definition}[Regular Vine Copulas]
A $d$-dimensional copula $C$ is a regular vine copula if there exists a tuple $(\mathcal{T}, \mathcal{C})$ such that
\begin{itemize}
    \item $\mathcal{T}$ is a regular vine tree sequence with $(d-1)$ trees;
    \item $\mathcal{C}=\{C_e:e\in \mathcal{E}_i, i\in[d-1], C_e\text{ is a bivariate copula}\}$ is a family of bivariate copulas for each edge;
    \item For $\forall e \in \mathcal{E}_i$ and $\forall i \in [d-1]$, then $C_e$ is corresponding to the copula of $\bigg(a(e),b(e)\bigg)\bigg|\,c(e)$.
\end{itemize}
Therefore, the density function of R-vine $C$ can be expressed as
\begin{equation}
    \begin{split}
        & c_{(\mathcal{T}, \mathcal{C})}(u_1,...,u_d)\\
        &=\prod_{i=1}^{d-1} \prod_{e\in \mathcal{E}_i} c_{(a(e),b(e))|c(e)}(C_{a(e)|c(e)}(v_{a(e)}|v_{c(e)}), C_{b(e)|c(e)}(v_{b(e)}|v_{c(e)}))\, .
    \end{split}
\end{equation}
\end{definition}

Here we illustrate an R-vine copula density in five dimensions where we use different colors for the pair copulas corresponding to each of the $d-1=4$ trees.
\begin{equation}
    \begin{split}
        c_{\mathcal{T},\mathcal{C}}(u_1,u_2,u_3,u_4,u_5) = & c(u_1,u_2)\cdot c(u_1,u_3)\cdot c(u_2,u_4)\cdot c(u_3,u_5)\\
        &\textcolor{red}{\cdot c_{1,5|3}(u_{1|3},u_{5|3})\cdot c_{2,3|1}(u_{2|1},u_{3|1})\cdot c_{1,4|2}(u_{1|2},u_{4|2})}\\
        &\textcolor{blue}{\cdot c_{2,5|1,3}(u_{2|1,3},u_{5|1,3})\cdot c_{3,4|1,2}(u_{3|1,2},u_{4|1,2})}\\
        &\textcolor{teal}{\cdot c_{4,5|1,2,3}(u_{4|1,2,3},u_{5|1,2,3})}\, .
    \end{split}
\end{equation}
Specifying the tree structure for an R-vine decomposition is essential and plays an important role in pair-copula estimation through conditioning. An excellent overview is given in \cite{czado2019analyzing}. Notably, an R-vine decomposition is not unique for a given joint copula pdf. An appealing tree selection algorithm is proposed in \cite{nagler2019model}, where authors derive a modified BIC criterion that prioritizes sparse trees while being consistent when the dimension $d$ grows at most at the rate of $\sqrt{n}$ where $n$ is the sample size.

\section{Martingale Posterior Distributions}
\label{apdx:mart_post}

Here we explain \emph{martingale posterior} distributions as a justification of the Bayesian approach through a focus on prediction. 

\subsection{The Bayesian Choice as a Consequence of Predictive Uncertainty}
\label{sec:intro_Bayes}
A common goal in statistics is the inference of a parameter or quantity $\theta$ by analysing data in the form of observations $(x_1,\ldots, x_n),\,n\in\mathbb{N}$. The rationale for learning from data is that each observation provides information about the underlying process and parameter $\theta$, without which statistical analysis would be redundant. Indeed, consider a decision-maker basing their decision on their belief about $\theta$ (in an i.i.d. setting). Having observed data $(x_1,\ldots,x_n)$ and given the opportunity to observe an additional data point $x_{n+1}$, they would be assumed to accept, as they could refine their beliefs on $\theta$. This process of updating one's beliefs based on data is at the core of the Bayesian approach. Equipped with an initial guess about the parameter of interest $\theta$ captured by the prior $\pi(\theta)$, the goal is the inference about the distribution of the parameter given observed data $(x_1,\ldots,x_n)$ which is encoded in the posterior density $\pi^{(n)}\left(\theta|(x_1,\ldots,x_n)\right)$.\par

For the decision-maker to refuse, it would mean that the additional observation has no significant effect on their belief. This implies, as identified by \cite{fong2021martingale} and \cite{holmes2023statistical}, that there is a point where additional observations $(x_{n+1},\ldots,x_N)$ provide no benefit to the knowledge update of $\theta$. Inspecting the Bayesian posterior in terms of observed data $x_{1:n}=(x_1,\ldots,x_n)$ and possible future observations to be made $x_{n+1:\infty}$, written as
\begin{equation}
    \pi^{(n)}\left(\theta|x_{1:n}\right) = \int\pi^{(n)}\left(\theta,x_{n+1:\infty}|x_{1:n}\right) ~d x_{n+1:\infty},
\end{equation}
one can expand the right-hand integrand by including the predictive density $p$ for future data, obtaining 
\begin{equation}
\label{eq:1}
    \pi^{(n)}\left(\theta|x_{1:n}\right) = \int\underbrace{\pi_\infty\left(\theta|x_{1:\infty}\right)}_{\text{Bayes estimate}} \cdot \underbrace{p(x_{n+1:\infty}|x_{1:n})}_{\text{predictive density}} ~d x_{n+1:\infty}.
\end{equation}

Having rewritten the posterior density in this way, it becomes apparent that the uncertainty in the value of $\theta$, which is given by the Bayes estimate, is a consequence of the uncertainty surrounding the imputation of missing observations $x_{n+1:\infty}$ through the predictive. With this insight, unlike the traditional Bayesian construction of prior-likelihood, \cite{fong2021martingale} proceed to replace the predictive density $p$ with a predictive mechanism using all available data to impute the missing $x_{n+1:\infty}$ (or at least impute $x_{n+1:N}$ for a sufficiently  large $N$), and replace the Bayes estimate $\pi^{(\infty)}$ with an appropriate functional of the complete data $x_{1:\infty}$. This predictive mechanism used to impute the unobserved data $x_{n+1:\infty}$ given observed $x_{1:n}$ directly leads to the \emph{martingale posterior} as the limiting distribution of functionals when the unobserved data has been imputed. 

\subsection{Constructing a martingale posterior for the DPMM}
\label{apdx:dpmm}
As explained in Section \ref{sec:rbp}, given a sequence of observations $(x_i)_{i\geq 1}$ from the data generating process, the Bayesian approach wants to obtain an infinite sequence of copula densities $(c_i)_{i\geq 1}$ to recursively update the Bayesian predictive distribution. However, finding such an infinite sequence of copulas isn't feasible in practice and the ensuing copula family is determined by the prior-likelihood pair one implicitly chooses. For example, from \cite{hahn2018recursive}, we have the following copula sequences:
\begin{itemize}
    \item If we select the likelihood and prior as $l(y;\theta)=\theta e^{-\theta y}$ and $\pi(\theta)=e^{-\theta}$, then
    \begin{equation}
        c_n(u,v_n) = \frac{(n+1)[(1-u)^{-1-1/n}(1-v_n)^{-1-1/n}]}{n[(1-u)^{-1/n}+(1-v_n)^{-1/n}-1]^{n+2}}\, ,
    \end{equation}
    which is a sequence of Clayton copula \citep{clayton1978model} with parameter $n^{-1}$.
    \item If we select the likelihood and prior as $l(y|\theta)=\phi(y; \theta, 1)$ and $\pi(\theta)=\phi(\theta; 0, \tau^{-1})$, then
    \begin{equation}
        c_n(u,v_n)=\frac{\phi_2(\Phi^{-1}(u),\Phi^{-1}(v_n); \bm 0, \bm \Sigma_{\rho_n})}{\phi(\Phi^{-1}(u); 0, 1)\phi(\Phi^{-1}(v_n); 0, 1)}\, ,
    \end{equation}
    where $\phi_2$ is the pdf of a bivariate normal distribution, $\phi$ is the pdf of normal distribution, and
    \begin{equation}
        \bm \Sigma_{\rho_n} = \begin{bmatrix}
            1 & \rho_n\\
            \rho_n & 1
        \end{bmatrix}\, ,\quad \quad \rho_n = (n+\tau)^{-1}\, .
    \end{equation}
    Therefore, we obtain a sequence of Gaussian copulas with parameters $\{\rho_n\}_{n\geq 1}$. Further, if we additionally put a conjugate prior on the variance parameter $\sigma^2$ of the likelihood $l(y|\theta)=\phi(y; \theta, \sigma^2)$, then we will recover a sequence of Student-t copulas.
\end{itemize}
The issue of model mis-specification naturally arises here due to the selection of prior-likelihood pair. Fortunately, here we can employ the DPMM which has relatively high flexibility due to its nonparametric nature.For the DPMM, suppose the first observation $x_1$ arrives, then we can obtain the first predictive via the updating kernel
\begin{equation}
    \begin{aligned}
        k_1(x,x_1) & = \frac{\mathbb{E}\left[f_G(x)\,f_G(x_1)\right]}{\mathbb{E}[f_G(x)]\cdot \mathbb{E}[f_G(x_1)]}\, ,
    \end{aligned}
\end{equation}
where $p_0(x)=\mathbb{E}[f_G(x)]$. Then, we can derive the numerator as 
\begin{equation}
\addtolength\jot{10pt}
    \begin{aligned}
        & \mathbb{E}\left[f_G(x)\,f_G(x_1)\right] \\
        =\, & \mathbb{E}\left[\sum_{j=1}^\infty\sum_{k=1}^\infty w_j\, w_k\, \phi\left(x ; \theta_j, 1\right) \phi\left(x_1 ;\theta_k, 1\right)\right]\\
        =\, & \left(1-\mathbb{E}\left[\sum_{i=1}^\infty w_i^2 \right]\right)\mathbb{E}_{G_0}\left[\phi(x;\theta,1) \right]\,\mathbb{E}_{G_0}\left[\phi(x_1;\theta,1) \right]\\
        +\,&  \mathbb{E}\left[\sum_{i=1}^\infty w_i^2 \right]\,\mathbb{E}_{G_0}\left[\phi(x;\theta,1) \,\phi(x_1;\theta,1) \right]\\
        =\, & \left(1-\mathbb{E}\left[\sum_{i=1}^\infty w_i^2 \right]\right)p_0(x)\,p_0(x_1)+  \mathbb{E}\left[\sum_{i=1}^\infty w_i^2 \right]\,\mathbb{E}_{G_0}\left[\phi(x;\theta,1) \,\phi(x_1;\theta,1) \right]\, .
    \end{aligned}
\end{equation}
The first equality follows from the stick-breaking representation \citep{sethuraman1994constructive} of the $\operatorname{DP}$, we can formulate $G$ as
\begin{equation}
    G(\,\cdot\,)\,=\,\sum_{i=1}^\infty w_i\,\delta_{\theta_i}(\,\cdot\,)
\end{equation}
where
\begin{equation}
    w_i = v_i \prod_{j<i}(1-v_j),\quad\quad v_i \stackrel{i.i.d.}{\sim}\operatorname{Beta}(1,a),\quad\quad \theta_i \stackrel{i.i.d.}{\sim} G_0.
\end{equation}
Then, the second equality follows from the condition that $\sum_{i=1}^\infty w_i = 1$ almost surely. Denote $\alpha=\mathbb{E}\left[\sum_{i=1}^\infty w_i^2 \right]$, then we can write the updating kernel as
\begin{equation}
\addtolength\jot{10pt}
    \begin{aligned}
        k_1(x,x_1) & = (1-\alpha)+\alpha \cdot \frac{\mathbb{E}_{G_0}\left[\phi(x;\theta,1) \,\phi(x_1;\theta,1) \right]}{p_0(x)\,p_0(x_1)}\\
        & = (1-\alpha)+\alpha \cdot \frac{\phi_2(\Phi^{-1}(u),\Phi^{-1}(v_1);\bm{0},\Sigma_\rho)}{\phi(\Phi^{-1}(u);0,1)\,\phi(\Phi^{-1}(v_1);0,1)}\\
        & = (1-\alpha)+\alpha \cdot c_\rho(\Phi^{-1}(u),\Phi^{-1}(v_1))\, ,
    \end{aligned}
\end{equation}
where 
\begin{itemize}
    \item covariance matrix $\Sigma_\rho=\begin{bmatrix}
    1 & \rho \\
    \rho & 1
\end{bmatrix}$,
    \item $\Phi^{-1}$ is the inverse cdf of standard normal distribution, 
    \item $u=\mathbb{P}_0(x)$, $v_1=\mathbb{P}_0(x_1)$,
    \item $p_0(\,\cdot\,)=\phi(\,\cdot\,;0,1+\tau^{-1})$,
    \item $c_\rho$ is a bivariate Gaussian copula density function with the correlation parameter $\rho=1/(1+\tau)$.
\end{itemize}
Therefore, we can see that the copula $c_1(u,v_1)$ for the first updating step is a mixture of independent copula and Gaussian copula. However, we will lose the tractable form of the copula from the second updating step. Following \cite{hahn2018recursive}, instead of deriving the explicit rule for the sequence of copula, we fix the correlation parameter $\rho$ and set $\alpha$ to be a $(0,1)$-valued decreasing sequence $(\alpha_i)_{i\geq 1}$.

\section{Strictly Proper Scoring Rules}
\label{apdx:SR}

In probabilistic machine learning, a Scoring Rule (SR) measures the appropriateness of a distribution $\mathbb{P}$ in modelling an observation $\mathbf{ x}\in \mathcal{X}$ through a score written as $S(\mathbb{P},\mathbf{ x})$. For (hyper) parameter estimation, suppose we aim to model the underlying distribution of an observation $\mathbf{ x}$ using a family of distributions $\mathbb{P}_{\mathbf{ \theta}}$ parametrized by $\mathbf{ \theta}$, then we can use a SR $S$ to select the suitable $\mathbf{ \theta}$. If we assume that $\mathbf{ x}\sim \mathbb{Q}$, then we can obtain the expected SR $\mathcal{S}$ via taking expectation w.r.t. $\mathbf{ x}$ as
\begin{equation}
    \mathcal{S}(\mathbb{P},\mathbb{Q})\,=\,\mathbb{E}_{\mathbf{ x}\sim \mathbb{Q}}\Bigl[S(\mathbb{P},\mathbf{ x})\Bigr]\, .
\end{equation}
Following \cite{gneiting2007strictly}, we call $\mathcal{S}$  strictly proper if $\mathcal{S}(\mathbb{P},\mathbb{Q})$ is minimized if and only if $\mathbb{P}=\mathbb{Q}$, i.e. for $\forall\,\mathbb{P}\in\mathcal{P}$ with $\mathbb{P}\neq \mathbb{Q}$ such that
\begin{equation}
    \mathcal{S}(\mathbb{Q},\mathbb{Q})<\mathcal{S}(\mathbb{P},\mathbb{Q})\, .
\end{equation}
Considering our settings of marginal distributions, we have observed data $\mathbf{ x}_{1:n}\stackrel{i.i.d.}{\sim}\mathbb{P}^*$, and we aim to model $\mathbb{P}^*$ using $\mathbb{P}_{\mathbf{ \theta}}$. More explicitly, we don't need to use $\mathbb{P}_{\mathbf{ \theta}}$ directly in the expected SR, instead we normally use its probability density function or samples from this distribution to evaluate. In general, we optimise
\begin{align*}
    \mathbf{\theta}^* &= \underset{\mathbf{\theta}\in\Theta}{\text{argmin}} \mathbb{E}_{\mathbf{\theta} \in \Theta} \mathcal{S}(\mathbb{P}_{\mathbf{\theta}}, \mathbb{Q}) \\
    &= \underset{\mathbf{\theta}\in\Theta}{\text{argmin}} \, \mathcal{S}(\mathbb{P}_{\mathbf{\theta}},\mathbb{Q}) .
\end{align*}
Since we do not have the complete population of $\mathbb{Q}$, we use the empirical SR $\hat{\mathcal{S}}$ instead, i.e.
\begin{equation}
    \hat{\mathbf{ \theta}}^*\,=\,\underset{\mathbf{\theta}\in\Theta}{\text{argmin}} \mathbb{E}_{\mathbf{ \theta}\in\Theta}\hat{\mathcal{S}}(\mathbb{P}_{\mathbf{ \theta}},\mathbb{Q})\, ,
\end{equation}
where $\hat{\mathcal{S}}(\mathbb{P}_{\mathbf{ \theta}},\mathbb{Q})=\frac{1}{n}\sum_{k=1}^n S(\mathbb{P}_{\mathbf{ \theta}},\mathbf{ x}_k)$. Under mild conditions, it can be proven that $\hat{\mathbf{ \theta}}^*\rightarrow\mathbf{ \theta}^*$ asymptotically in \cite{dawid2016minimum}. 
For any positive definite kernel $k(\,\cdot\,,\,\cdot\,)$, the Kernel Score \citep{gneiting2007strictly} is given by
\begin{equation}
    S_k(\mathbb{P}_{\mathbf{ \theta},\mathbf{ x}})\,=\,\mathbb{E}[ k(\mathbf{ Y}, \mathbf{ Y}') ] - 2\cdot \mathbb{E}[ k(\mathbf{ Y}, \mathbf{ x}) ]\, ,
\end{equation}
where $\mathbf{ Y}, \mathbf{ Y}' \sim \mathbb{P}_{\mathbf{ \theta}}$. If we set $k(\mathbf{ x}, \mathbf{ y})=-||\mathbf{ x} - \mathbf{ y}||^\beta_2$, then we obtain the \textbf{energy score} which is strictly proper if $\mathbb{E}_{\mathbf{ Y}\sim \mathbb{P}_{\mathbf{ \theta}}}||\mathbf{ Y}||^\beta_2<\infty$. The energy Score is defined as
\begin{equation}
    S_E^\beta(\mathbb{P}_{\mathbf{ \theta}}, \mathbf{ x})\,=\,2\cdot \mathbb{E}||\mathbf{ Y} - \mathbf{ x}||^\beta_2 - \mathbb{E}||\mathbf{ Y} - \mathbf{ Y}'||^\beta_2\,,\quad\text{for }\beta \in (0,2]\, .
\end{equation}
Here $\mathbf{ Y} and \mathbf{ Y}'$ are i.i.d. samples from $\mathbb{P}_{\mathbf{ \theta}}$. Practically, given finite samples $\mathbf{ y}_1(\mathbf{ \theta}),...,\mathbf{ y}_m(\mathbf{ \theta}) \stackrel{i.i.d.}{\sim}\mathbb{P}_{\mathbf{ \theta}}$ where $\mathbf{ y}(\mathbf{ \theta})$ denotes a sample from $\mathbb{P}_{\mathbf{ \theta}}$ which is a differentiable function of $\mathbf{ \theta}$, the unbiased estimate of the energy Score via Monte Carlo approximation is
\begin{equation}
    \hat{S}_E^\beta(\mathbf{ y}_{1:m}(\mathbf{ \theta}), \mathbf{ x})=\frac{2}{m}\sum_{j=1}^m ||\mathbf{ y}_j(\mathbf{ \theta}) - \mathbf{ x}||^\beta_2 - \frac{1}{m(m-1)}\sum_{k\neq j}||\mathbf{ y}_j(\mathbf{ \theta}) - \mathbf{ y}_k(\mathbf{ \theta})||^\beta_2\, .
\end{equation}
Similarly, we can derive the unbiased gradient of $S_E^\beta(\mathbf{ y}_{1:m}(\mathbf{ \theta}), \mathbf{ x})$ w.r.t. $\mathbf{ \theta}$ which is crucial for any gradient descent algorithm. For notational convenience, we define
\begin{equation}
    g(\mathbf{ Y},\mathbf{ Y}', \mathbf{ x})=2\cdot ||\mathbf{ Y} - \mathbf{ x}||^\beta_2 - ||\mathbf{ Y} - \mathbf{ Y}'||^\beta_2\, ,
\end{equation}
then $S_E^\beta(\mathbb{P}_{\mathbf{ \theta}}, \mathbf{ x})=\mathbb{E}_{\mathbf{ Y},
\mathbf{ Y}'\sim \mathbb{P}_{\mathbf{ \theta}}}[g(\mathbf{ Y},\mathbf{ Y}', \mathbf{ x})]$. Next, we have that
\begin{equation}
    \begin{split}
        & \nabla_{\mathbf{ \theta}}S_E^\beta(\mathbb{P}_{\mathbf{ \theta}}, \mathbf{ x}) \\ 
        = & \nabla_{\mathbf{ \theta}}\mathbb{E}_{\mathbf{ Y},\mathbf{ Y}'\sim \mathbb{P}_{\mathbf{ \theta}}}[g(\mathbf{ Y},\mathbf{ Y}', \mathbf{ x})]\\
        = & \mathbb{E}_{\mathbf{ Y},\mathbf{ Y}'\sim \mathbb{P}_{\mathbf{ \theta}}}\Bigl[\nabla_{\mathbf{ \theta}}g(\mathbf{ Y},\mathbf{ Y}', \mathbf{ x})\Bigr]\\
        \simeq & \frac{1}{m(m-1)}\sum_{j=1}^m\sum_{k=1}^m \nabla_{\mathbf{ \theta}}g(\mathbf{ y}_j(\mathbf{ \theta}),\mathbf{ y}_k(\mathbf{ \theta}), \mathbf{ x})\cdot \delta_{\{j\neq k\}}\\
        = & \frac{2}{m}\sum_{j=1}^m \nabla_{\mathbf{ \theta}} ||\mathbf{ y}_j(\mathbf{ \theta}) - \mathbf{ x}||^\beta_2 - \frac{1}{m(m-1)}\sum_{k\neq j} \nabla_{\mathbf{ \theta}} ||\mathbf{ y}_j(\mathbf{ \theta}) - \mathbf{ y}_k(\mathbf{ \theta})||^\beta_2\\
        = & \widehat{\nabla}_{\mathbf{ \theta}}S_E^\beta(\mathbf{ y}_{1:m}(\mathbf{ \theta}), \mathbf{ x})\, .
    \end{split}
\end{equation}
Furthermore, if $\mathbb{P}_{\mathbf{ \theta}}$ and $\mathbb{Q}$ both are distributions of $\mathbb{R}$-valued random variables, then the energy Score will be reduced to 
\begin{equation}
    S_E^\beta(\mathbb{P}_{\mathbf{ \theta}}, x)\,=\,2\cdot \mathbb{E}|Y -  x|^\beta - \mathbb{E}| Y -  Y'|^\beta\,,\quad\text{for }\beta \in (0,2]\, ,
\end{equation}
where $Y , Y'\sim\mathbb{P}_{\mathbf{ \theta}}$. Notice that this will become the Continuous Ranked Probability Score (CRPS) \citep{szekely2005new} when $\beta=1$.

The energy Score is a strictly proper scoring rule   \citep{gneiting2007strictly,dawid2016minimum,pacchiardi2022score} and is a special instance of the maximum mean discrepancy \citep{gretton2006kernel} as well as a statistical divergence. It has been used as an effective objective for copula estimation \citep{janke2021implicit,alquier2022estimation,huk2023probabilistic} and even for R-BP marginal predictives as shown in \cite{huk2023david}. It has also enjoyed success as a objective for Normalising Flows \citep{si2023semi} and generative models \citep{pacchiardi2021probabilistic}. The energy Score 
 is the only objective among Wasserstein $p$-metric \citep{kantorovich1960mathematical} that supports unbiased gradient evaluations \citep{bellemare2017cramer,pacchiardi2022score}, has a known optimisation-free solution and features a faster convergence rate than similar integral probability metrics \citep{sriperumbudur2009note,frazier2024impact}.

\section{Proofs}

\subsection{Lemma \ref{lemma:qbvine}}
\label{apdx:A_lemma}
\begin{proof}
We follow the Definition 14.4-3 of stochastic boundedness from \cite{bishop2007discrete}. Begin by choosing $\delta \in (0,1)$. From Proposition 1 in \cite{fong2021martingale}, we have for $\epsilon>0$ and $M>n$, over the supremum of $x\in\mathcal{X}$:
\begin{equation}
\begin{split}
&  \mathbb{P}\left(\left|P^{(M)}(x)-P^{(n)}(x)\right| \geq \epsilon\right) \leq 2 \exp \left(-\frac{\epsilon^2}{\frac{2 \epsilon \alpha_{n+1}}{3}+\frac{1}{2} \sum_{i=n+1}^M \alpha_i^2}\right) \\
 \Leftrightarrow \;\lim_{M\rightarrow\infty}& \mathbb{P}\left(\left|P^{(M)}(x)-P^{(n)}(x)\right| \leq \epsilon\right) \geq \lim_{M\rightarrow\infty} 1-2 \exp \left(-\frac{\epsilon^2}{\frac{2 \epsilon \alpha_{n+1}}{3}+\frac{1}{2} \sum_{i=n+1}^M \alpha_i^2}\right).
\end{split}
\end{equation}

Next, we choose a value $\epsilon_n$ (where the subscript shows that this quantity is dependent on $n$) to have the appropriate probability on the right-hand side by enforcing:
\begin{equation}
\begin{split}
&\delta=2 \exp \left(-\frac{\epsilon_n^2}{\frac{2 \epsilon_n \alpha_{n+1}}{3}+\frac{1}{2} \sum_{i=n+1}^\infty \alpha_i^2}\right)\\
\Leftrightarrow \; &0=\epsilon_n^2+\log \left(\frac{\delta}{2}\right) \frac{2 \alpha_{n+1}}{3} \epsilon_n+\frac{1}{2} \log \left(\frac{\delta}{2}\right) \sum_{i=n+1}^\infty \alpha_i^2
\end{split}
\end{equation}
with solution
\begin{equation}
\epsilon_n  =\frac{-\log \left(\frac{\delta}{2}\right) \frac{2 \alpha_{n+1}}{3}+\sqrt{\left[\log \left(\frac{\delta}{2}\right) \frac{2 \alpha_{n+1}}{3}\right]^2-2 \log \left(\frac{\delta}{2}\right) \sum_{i=n+1}^\infty \alpha_i^2}}{2} .
\end{equation}
Due to $\delta\in (0,1)$ and $\alpha_i>0 \, \forall i$, we have:
\begin{equation}
        |\epsilon_n|  = n^{-1/2} \frac{-\log \left(\frac{\delta}{2}\right) n^{1/2}\frac{2 \alpha_{n+1}}{3} +\sqrt{\left[\log \left(\frac{\delta}{2}\right)n^{1/2} \frac{2 \alpha_{n+1}}{3}\right]^2-2 \log \left(\frac{\delta}{2}\right) n\sum_{i=n+1}^\infty \alpha_i^2}}{2} .
\end{equation}

With the choice of $\alpha_i = (2-\frac{1}{i})(\frac{1}{i+1})$, we see that $\lim_{n\rightarrow \infty} n^{-a}\alpha_i$ is bounded for all powers $ a\geq-1$. As such, both $-\log \left(\frac{\delta}{2}\right) n^{1/2}\frac{2 \alpha_{n+1}}{3}$ and $\left[\log \left(\frac{\delta}{2}\right)n^{1/2} \frac{2 \alpha_{n+1}}{3}\right]^2$ will safely be bounded for large enough $n$. Similarly, due to the choice of $\alpha_i$, we have $\sum_{i=n+1}^\infty (\alpha_i)^2=\mathcal{O}(n^{-1})$, meaning $n^{-a}\sum_{i=n+1}^\infty (\alpha_i)^2$ will be bounded for large enough $n$ as long as $a\geq-1$. Hence $-2 \log \left(\frac{\delta}{2}\right) n\sum_{i=n+1}^\infty \alpha_i^2$ will also be bounded for large enough $n$.

Consequently, for our choice of $\delta$, there exists a finite $K>0$ and a finite $N>0$ such that:
\begin{equation}
   \sup_{x\in\mathcal{X}} \mathbb{P}\left(\left|\frac{P^{(\infty)}(x)-P^{(n)}(x)}{n^{-1/2}} \right|\leq K\right) \geq \ 1-\delta \quad \forall n>N.
\end{equation}

\end{proof}

\subsection{Theorem \ref{thm:qbvine}}
\label{apdx:A_thm}
We prove the statement for general densities, which then naturally extends to predictive densities. The proof of this result is largely an adaptation of the simplified vine copula convergence result (Theorem 1 in \cite{nagler2016evading}) but where no rate on marginal densities is required. As such, our proof shares an identical approach until the last part, where we deviate. We have a weaker result for the convergence of distributions and yet show in what follows that convergence of our copula estimator can be obtained even without convergence guarantees on marginal densities. 

\paragraph{Notation used in the proof} 
We follow vine copula notation from Appendix \ref{apdx:copulas_vine} and use superscripts with parenthesis to now differentiate between samples instead of predictive steps, following notational conventions of the literature. We define $h$-functions as the  conditional distribution functions for pair copulas
\begin{equation}
    h_{j_e \mid \ell_e ; D_e^{\prime}}(u \mid v):=\int_0^u c_{j_e, \ell_e ; D_e^{\prime}}(s, v) d s, \quad \text { for }(u, v) \in[0,1]^2 .
\end{equation}
Further, we refer to the true unobserved samples of pair-copulas as \begin{equation}
\label{eq:proof_trueu}
    U_{j_e \mid D_e}^{(i)}:=F_{j e} \mid D_e\left(X_{j_e}^{(i)} \mid \boldsymbol{X}_{D_e}^{(i)}\right), \quad U_{k_e \mid D_e}^{(i)}:=F_{k_e \mid D_e}\left(X_{k_e}^{(i)} \mid \boldsymbol{X}_{D_e}^{(i)}\right),
\end{equation}
for $i=1, \ldots, n$. We also denote estimators and quantities obtained by application to these unobserved samples with a bar superscript, for example:
\begin{equation}
\bar{c}_{j_e, k_e ; D_e}(u, v):=\bar{c}_{j_e, k_e ; D_e}\left(u, v, U_{j_e \mid D_e}^{(1)}, \ldots, U_{k_e \mid D_e}^{(n)}\right). 
\end{equation}
Finally, denote with a hat superscript all quantities and estimators obtained by using $\hat{U_l}^{(i)}:=\hat{F_l}(X_l^{(i)})$ instead of the true unobserved samples used in Equation ~\eqref{eq:proof_trueu}.

\paragraph{Assumptions}
For completeness, we state assumptions about the marginal distribution estimator $\hat{P}$ as well as bivariate copula estimators $\hat{c}$, even though our practical choices from the main text respect these. We begin by stating the assumption about our marginal distribution estimator denoted $\hat{P}$:
\begin{itemize}
    \item \textbf{A1:} The marginal distribution function estimator has the following convergence rate:
    \begin{equation}
    \sup_{x\in\mathcal{X}} \left|\hat{P}(x)-P(x)\right| = \mathcal{O}_p\left(n^{-1/2}\right).
\end{equation}
\end{itemize}

Next, we state our assumptions about the pair copula estimator for completeness:
\begin{itemize}
    \item \textbf{A2:} For all $e \in E_m, m=1, \ldots, d-1$, with $-r$ the convergence rate of a bivariate KDE copula estimator, it holds:\\
(a) for all $(u, v) \in(0,1)^2$,
\begin{equation}
\bar{c}_{j_e, k_e ; D_e}(u, v)-c_{j_e, k_e ; D_e}(u, v)=O_p\left(n^{-r}\right),
\end{equation}
(b) for every $\delta \in(0,0.5]$,
\begin{equation}
\begin{aligned}
\sup _{(u, v) \in[\delta, 1-\delta]^2}\left|\bar{h}_{j_e \mid k_e ; D_e}(u \mid v)-h_{j_e \mid k_e ; D_e}(u \mid v)\right| & =o_{\text {a.s. }}\left(n^{-r}\right), \\
\sup _{(u, v) \in[\delta, 1-\delta]^2}\left|\bar{h}_{k_e \mid j_e ; D_e}(u \mid v)-h_{k_e \mid j_e ; D_e}(u \mid v)\right| & =o_{\text {a.s. }}\left(n^{-r}\right) .
\end{aligned}
\end{equation}

\item \textbf{A3:} For all $e \in E_m, m=1, \ldots, d-1$, it holds:\\
(a) for all $(u, v) \in(0,1)^2$,
\begin{equation}
\widehat{c}_{j_e, k_e ; D_e}(u, v)-\bar{c}_{j_e, k_e ; D_e}(u, v)=O_p\left(a_{e, n}\right),
\end{equation}
(b) for every $\delta \in(0,0.5]$,
\begin{equation}
\begin{aligned}
& \sup _{(u, v) \in[\delta, 1-\delta]^2}\left|\widehat{h}_{j_e \mid k_e ; D_e}(u \mid v)-\bar{h}_{j_e \mid k_e ; D_e}(u \mid v)\right|=O_{\text {a.s. }}\left(a_{e, n}\right), \\
& \sup _{(u, v) \in[\delta, 1-\delta]^2}\left|\widehat{h}_{k_e \mid j_e ; D_e}(u \mid v)-\bar{h}_{k_e \mid j_j ; D_e}(u \mid v)\right|=O_{\text {a.s. }}\left(a_{e, n}\right),
\end{aligned}
\end{equation}

where
\begin{equation}
a_{e, n}:=\sup _{i=1, \ldots, n}\left|\widehat{U}_{j_e \mid D_e}^{(i)}-U_{j_e \mid D_e}^{(i)}\right|+\left|\widehat{U}_{k_e \mid D_e}^{(i)}-U_{k_e \mid D_e}^{(i)}\right| .
\end{equation}

\item \textbf{A4:} For all $e \in E_m, m=1, \ldots, d-1$, the pair copula densities ${c}_{j_e, k_e ; D_e}$ are continuously differentiable on $(0,1)^2$.
\end{itemize}

We note that A1 is satisfied by our marginal predictive estimator, as proved in \ref{apdx:A_lemma} while A2, A3, A4 are all satisfied by the KDE pair copula estimator, as shown in \cite{nagler2016evading}.\\

\paragraph{Proof strategy}
We perform the proof in three parts. To obtain the final result, we first prove the convergence of pseudo observations to true observations through induction. We then rely on the aforementioned convergence to show that feasible pair-copula density estimators $\hat{c}_{j_e,k_e;D_e}$, and conditional distribution function estimators $\hat{ F}_{j_e|D_e}$ and $\hat{F}_{k_e|D_e}$ are pointwise consistent. Lastly, these two results are combined to obtain the convergence of the joint copula estimator $\hat{\mathbf{c}}$ to the true multivariate copula $\mathbf{c}$.

\paragraph{Part 1: Convergence of pseudo observations}
we start by proving a convergence rate of samples on the copula space obtained through marginal distributions to their true unobserved equivalent. That is, $\forall$ $e \in E_1, \ldots, E_{d-1}, i=1, \ldots, n$,
\begin{equation}
\label{eq:proof1}
\widehat{U}_{j_e \mid D_e}^{(i)}-U_{j_e \mid D_e}^{(i)}=\mathcal{O}_{p}\left(n^{-r}\right), \quad \widehat{U}_{k_e \mid D_e}^{(i)}-U_{k_e \mid D_e}^{(i)}=\mathcal{O}_{p}\left(n^{-r}\right) .
\end{equation}

Starting with $e \in E_1$ (the conditioning set $D_e$ is empty), as a consequence of \textbf{A1} we obtain the bound\\
\begin{equation}
\left|\widehat{U}_{j_e}^{(i)}-U_{j_e}^{(i)}\right|=\left|\widehat{F}\left(X_{j_e}\right)-F\left(X_{j_e}\right)\right| \leq \sup _{x_{j_e} \in \Omega_{X_{j e}}}\left|\widehat{F}\left(x_{j_e}\right) -\overline{F}\left(x_{j_e}\right)\right|=\mathcal{O}_{p}\left(n^{-r}\right).
\end{equation}

To obtain the second part of \eqref{eq:proof1} one can use identical arguments, providing the initial inductive hook. Next, assuming \eqref{eq:proof1} holds for all $e \in E_{m}$ with $1 \le m \le d-2$, we extend the induction to $e \in E_{m+1}$. Recalling that pseudo-observations $e^\prime \in E_{m+1}$ are equal to $\hat {U}_{j_e | D_e \cup k_e}^{(i)}$ or $\hat {U}_{k_e | D_e \cup j_e}^{(i)}$ for some $e \in E_m$, it follows by multiple triangle inequalities that

\begin{eqnarray*}
	\bigl\vert \hat{U}_{j_e | D_e \cup k_e}^{(i)}  - U_{j_e | D_e \cup k_e}^{(i)}  \bigr\vert 
    &=& \bigl\vert \hat{h}_{j_e | k_e; D_e}\bigl\{\hat{U}_{j_e | D_e}^{(i)}  | \hat{U}_{k_e | D_e}^{(i)} \bigr\} -  h_{j_e | k_e; D_e}\bigl\{U_{j_e | D_e}^{(i)}  | U_{k_e | D_e}^{(i)} \bigr)\}\bigr\vert \\
     &\le & \phantom{+ \;} \bigl\vert \hat{h}_{j_e | k_e; D_e}\bigl\{\hat{U}_{j_e | D_e}^{(i)}  | \hat{U}_{k_e | D_e}^{(i)} \bigr\} -  \overline h_{j_e | k_e; D_e}\bigl\{\hat{U}_{j_e | D_e}^{(i)}  | \hat{U}_{k_e | D_e}^{(i)} \bigr\}  \bigr\vert \\
    & &+ \; \bigl\vert \overline h_{j_e | k_e; D_e}\bigl\{\hat{U}_{j_e | D_e}^{(i)}  | \hat{U}_{k_e | D_e}^{(i)} \bigr\} -  h_{j_e | k_e; D_e}\bigl\{\hat{U}_{j_e | D_e}^{(i)}  | \hat{U}_{k_e | D_e}^{(i)} \bigr\}  \bigr\vert \\
       & & + \; \bigl\vert h_{j_e | k_e; D_e}\bigl\{\hat{U}_{j_e | D_e}^{(i)}  | \hat{U}_{k_e | D_e}^{(i)} \bigr\} -   h_{j_e | k_e; D_e}\bigl\{U_{j_e | D_e}^{(i)}  | U_{k_e | D_e}^{(i)} \bigr\}  \bigr\vert \\
       &=&  H_{1,n} + H_{2, n} + H_{3, n}
\end{eqnarray*}

Notice that for $\delta_i := \min\bigl\{U_{j_e | D_e}^{(i)}, U_{k_e | D_e}^{(i)}, 1 - U_{j_e | D_e}^{(i)}, 1 - U_{k_e | D_e}^{(i)} \bigr\} >0$, we have that all realisations $(U_{j_e | D_e}^{(i)},  U_{k_e | D_e}^{(i)})$ are contained within $[\delta_i, 1 - \delta_i]^2$ almost surely. Similarly, 
 all realisations $(\hat{U}_{j_e | D_e}^{(i)},  \hat{U}_{k_e | D_e}^{(i)})$ are in $[\delta_i/2, 1 - \delta_i/2]^2$ for sufficiently large $n$ as a consequence of \eqref{eq:proof1}. Combining this with \textbf{A2} (b) and \textbf{A3} (b), with large enough $n$:
 \begin{align*}
	H_{1, n} &\le \sup_{(u, v) \in [\delta_i/2, 1-\delta_i/2]^2} \bigl\vert \hat{h}_{j_e|k_e;D_e}(u|v) - \overline h_{j_e|k_e;D_e}(u|v) \bigr\vert = \mathcal{O}_{p}(a_{e,n}), \\
	H_{2, n} &\le \sup_{(u, v) \in [\delta_i/2, 1-\delta_i/2]^2} \bigl\vert \overline h_{j_e|k_e;D_e}(u|v) - h_{j_e|k_e;D_e}(u|v) \bigr\vert = \mathcal{O}_{p}(n^{-r}),
\end{align*}

and by another application of \eqref{eq:proof1},

\begin{align*}
a_{e,n} = \sup_{i = 1, \dots, n} \vert\hat{U}_{j_e|D_e}^{(i)} - U_{j_e|D_e}^{(i)}\bigr\vert +  \bigl\vert\hat{U}_{k_e|D_e}^{(i)} - U_{k_e|D_e}^{(i)}\bigr\vert = \mathcal{O}_{p}(n^{-r}),
\end{align*} 
giving $H_{1,n} = \mathcal{O}_{p}(n^{-r})$. To complete part 1, we want to show that $H_{3,n} = \mathcal{O}_{p}(n^{-r})$. We write the gradient of $h_{j_e | k_e; D_e}$ as $\nabla h_{j_e | k_e; D_e}$ and use a first-order Taylor approximation of $h_{j_e | k_e; D_e}\bigl(\hat{U}_{j_e | D_e}^{(i)}  | \hat{U}_{k_e | D_e}^{(i)} \bigr)$ around $\bigl(U_{j_e | D_e}^{(i)}, U_{k_e | D_e}^{(i)}\bigr)$ to get

\begin{align*}
	H_{3, n} &\le \biggl\vert \nabla^\top h_{j_e | k_e; D_e}\bigl(U_{j_e | D_e}^{(i)}  | U_{k_e | D_e}^{(i)}\bigr)  \begin{pmatrix}
	\hat{U}_{j_e | D_e}^{(i)} -  U_{j_e | D_e}^{(i)} \\
     \hat{U}_{k_e | D_e}^{(i)} -  U_{k_e | D_e}^{(i)}
\end{pmatrix} \biggr\vert  + o_{a.s.}\begin{pmatrix}
	\hat{U}_{j_e | D_e}^{(i)} -  U_{j_e | D_e}^{(i)} \\
     \hat{U}_{k_e | D_e}^{(i)} -  U_{k_e | D_e}^{(i)}
\end{pmatrix}
 \end{align*}

getting the desired result,. and hence the first equality of \eqref{eq:proof1} by yet another application of \eqref{eq:proof1}. The second equation follows by identical steps, completing the induction.

\paragraph{Part 2: consistency of conditional CDF and pair-copula density estimators}

Following similar steps to as in Part 1, one can obtain that for all $e \in E_1, \dots, E_{d-1}$, and all $\bm x \in \Omega_{\bm X}$, the CDF estimators are bounded as
\begin{align}
	\begin{aligned}\label{eq:proof2}
		 \hat{F}_{j_e|D_e}\bigl(x_{j_e} | {\bm x}_{D_e}\bigr) - F_{j_e|D_e}\bigl(x_{j_e} | \bm x_{D_e}\bigr) &= \mathcal{O}_p(n^{-r}), \\
		\hat{F}_{k_e|D_e}\bigl(x_{k_e} |{\bm x}_{D_e}\bigr)- F_{k_e|D_e}\bigl(x_{k_e} | \bm x_{D_e}\bigr) &= \mathcal{O}_p(n^{-r}).
	\end{aligned}
\end{align} 
To bound pair-copula density estimators, we apply the triangle inequality to obtain

\begin{eqnarray}
    & & \bigl\vert \hat{c}_{j_e,k_e;D_e}\bigl(u, v\bigr) -  c_{j_e,k_e;D_e}\bigl(u, v\bigr)\bigr\vert  \\
    &\le & \bigl\vert \hat{c}_{j_e,k_e;D_e}\bigl(u, v\bigr) - \overline c_{j_e,k_e;D_e}\bigl(u, v\bigr)\bigr\vert + \bigl\vert \overline c_{j_e,k_e;D_e}\bigl(u, v\bigr) -  c_{j_e,k_e;D_e}\bigl(u, v\bigr)\bigr\vert  \\
       &=& R_{n,1} + R_{n, 2}.
\end{eqnarray}
Assumption \textbf{A3} (a) coupled with \eqref{eq:proof1} bounds $R_{n,1}$ while $R_{n, 2}$ is bounded by\textbf{A2} (a), completing the second part.
\\

\paragraph{Part 3: Consistency of the vine copula estimator}
Up to now, our steps have mirrored those of \cite{nagler2016evading}. With the following we differentiate ourselves by noticing that to get a bound on the copula estimator alone, no marginal densities are required:

\begin{align*} \allowdisplaybreaks[1]
	\hat{\mathbf{c}}(\bm x) &= \prod_{k=1}^{d-1} \prod_{e \in E_k} \hat{c}_{j_e, k_e; D_e} \bigl\{\hat{F}_{j_e|D_e}(x_{j_e}|{\bm x}_{D_e}), \, \hat{F}_{k_e|D_e}(x_{k_e}|{\bm  x}_{D_e})\bigr\} \\ 
	&= \prod_{k=1}^{d-1} \prod_{e \in E_k}  \biggl[c_{j_e, k_e; D_e} \bigl\{\hat{F}_{j_e|D_e}(x_{j_e}|{\bm x}_{D_e}), \,  \hat{F}_{k_e|D_e}(x_{k_e}|{\bm  x}_{D_e})\bigr\} + \mathcal{O}_p(n^{-r})\biggr] \\ 
	&= \prod_{k=1}^{d-1} \prod_{e \in E_k}  \biggl[c_{j_e, k_e; D_e} \bigl\{ F_{j_e|D_e}(x_{j_e}|{\bm x}_{D_e}), \,  F_{k_e|D_e}(x_{k_e}|{\bm  x}_{D_e})\bigr\} + \mathcal{O}_p(n^{-r}) + \mathcal{O}_p(n^{-r})\biggr] \\
	&=  \mathbf{c}(\bm x)  + \mathcal{O}_p(n^{-r}). \tag*{\qed}
\end{align*}
where the first line is a consequence of pair-copula estimator convergence in Part 2, and the second equality is a consequence of \eqref{eq:proof2} and the fact that $c_{j_e, k_e ; D_e}$ is continuously differentiable. This concludes the proof.

\section{Experiments and practical details}
\label{apdx:exp}

\begin{table}[t]
\caption{Average LPS (in bpd, lower is better) over five runs with standard errors for the Digits dataset. 
}
\small
\centering\begin{tabular}{ll}
\toprule
Model &                   DIGITS                   \\
n/d & 1797/64\\
\midrule
MAF       &     ${-8.76_{\pm 0.10}}$ \\
RQ-NSF    &     ${-6.17_{\pm 0.13}}$ \\
R-BP      &     ${-8.80_{\pm 0.00}}$ \\
R$_d$-BP  &     ${-7.46_{\pm 0.12}}$ \\
AR-BP     &     ${-8.66_{\pm 0.03}}$ \\
AR$_d$-BP &     ${-7.46_{\pm 0.18}}$  \\
ARnet-BP  &     ${-7.72_{\pm 0.28}}$  \\
\midrule
QB-Vine (30)  &     ${-6.56_{\pm 0.13}}$ \\
QB-Vine (50) &     ${-8.19_{\pm 0.14}}$ \\
QB-Vine (100) &     ${-8.54_{\pm 0.17}}$ \\
QB-Vine (200) &     ${-9.03_{\pm 0.17}}$ \\
QB-Vine (300) &     ${-10.24_{\pm 0.14}}$ \\
QB-Vine (400) &     ${-10.43_{\pm 0.13}}$ \\
QB-Vine (500) &     $\bm{-10.75_{\pm 0.03}}$ \\
QB-Vine (full) &     ${-10.11_{\pm 0.19}}$ \\
\bottomrule
\end{tabular}
\label{tab:digits}
\vspace{-6mm}
\end{table}

Details of the UCI datasets are discussed in \cite{ghalebikesabi2022density}. In experiments, We use the implementation of vine copulas from \cite{schepsmeier2015package} through a Python interface. We follow the data pre-processing of \cite{ghalebikesabi2022density} to make results comparable. 

\paragraph{Hyperparameter search}
In our experiments on small UCI datasets, we use a grid search over 50 values from 0.1 to 0.99 to select $\rho$, independently across dimensions, selecting possibly different values for each. To select the KDE pair copula bandwidth we use a 10-fold cross-validation to evaluate the energy score for 50 values between 2 and 4, as these ranges were appraised to give the best fits on preliminary runs on train data. We note the hyperparameter selection of $\rho$ also supports gradient-based optimisation (see Appendix \ref{apdx:SR}). For energy score evaluations, with marginal predictives we sample 100 observations and compare them to the training data, while for the copula we simulate 100 samples from the joint to compare with the energy score against training data.\\
For the PRticle filter we took an initial sample size of $d\cdot n $ to accommodate for different dimensions while not being overcome by computational burden. The Kernel used is a standard multivariate Gaussian kernel.

\paragraph{Compute} We ran all experiments on an Intel(R) Core(TM) i7-9700 Processor. In total our experiments for the QB-Vine took a combined 15 hours with parallelisation across 8 cores, or 120 hours on a single core. The Digits dataset on 8 cores took us 6 hours to run with 5 different train and test splits. Other datasets require about half an hour for five runs in parallel, while the Gaussian Mixture study had a total time of 4 hours. The PRticle Filter takes about two hours on all density estimation tasks combined. The RQ-NSF experiments on Gaussian Mixture Models took about 4 hours combined. Our total compute time is therefore the equivalent of 126 hours on a single core. Our implementation of the QB-Vine is not fully efficient so the computational times are rough upper bounds.

\paragraph{Selection of $P^{(0)}$ in practice}
In practice, the initial choice of $P^{(0)}$ is made to reflect the support and spread of the data. As we standardize our data to be mean $0$ and have standard deviation $1$, a natural choice is the standard Gaussian $\mathcal{N}(0,1)$. However, given distribution transformations are used throughout the recursion, if observations fall in the tails of the predictive density, numerical overflow might make them redundant, lowering the accuracy of our approach. Therefore, it is desirable to have heavier tails than those of the true distribution to capture outliers accurately. This coincides with the theory on such recursion requiring heavier tails for the initial predictive compared to those of the data, see the assumptions on $P^{(0)}$ in \cite{tokdar2009consistency,martin2021survey} and \cite{hahn2018recursive,fong2021martingale}. As such, our default choice is a standard Cauchy distribution. \\
For experiments, we tested Normal, Cauchy, and Uniform (over the range of the training samples plus a margin) initial guesses. Generally, the Cauchy distribution is a well-performing choice and obtained the best NLL in all but two experiments. We give a summary of initial density $p_0$ choices for different experiments in Table~\ref{tab:initial_dens} for density estimation and Table~\ref{tab:initial_sup} for regression and classification. 

\begin{table}[h]
    \centering
    \begin{tabular}{cccccc}
    \toprule
    Dataset & WINE & BREAST & PARKIN & IONO & BOSTON   \\
    \midrule
    Choice of $p_0$ & Cauchy & Cauchy & Cauchy & Normal & Cauchy \\
    \bottomrule
    \end{tabular}
    \caption{Choice of $p_0$ for different density estimation experiments.}
    \label{tab:initial_dens}
\end{table}

\begin{table}[h]
    \centering
    \small
    \begin{tabular}{ccccccc}
    \toprule
    Dataset & BOSTON (reg) & CONCR (reg) & DIAB (reg) & IONO (class) & PARKIN (class) &  \\
    \midrule
    Choice of $p_0$ & Normal & Cauchy& Cauchy& Cauchy& Cauchy&\\
    \bottomrule
    \end{tabular}
    \caption{Choice of $p_0$ for different regression and classification experiments.}
    \label{tab:initial_sup}
\end{table}

\paragraph{Marginal Sampling} 
Here we briefly introduce the basic idea of the inverse sampling method for marginal predictive densities via linear interpolation. In general, suppose $Z\sim \mathbb{F}$ with the corresponding cumulative distribution function $F$, we denote $F^{-1}$ as the generalized inverse function of $F$, given $u\sim\mathcal{U}[0,1]$, then we can compute $z=F^{-1}(u)$ as a sample from $\mathbb{F}$.\\[0.5\baselineskip]
Suppose we have access to the cdf of a univariate random variable, meaning that we can evaluate the cdf $F$ value given any location on the axis, and we have a set of samples $z_{1:n}\stackrel{i.i.d.}{\sim}\mathbb{F}$. Our goal is to approximate the generalized inverse $F^{-1}$ cdf then we can sample the realisations of this univariate random variable as many as we want. Additionally, we assume that the random variable is bounded and the closed support is $\mathcal{R}=[\text{min}(z_{1:n})-\eta,\text{max}(z_{1:n})+\eta]$ where $\eta$ is a positive hyperparameter we manually tune as the range of extrapolation.\\[0.5\baselineskip]
The idea is to construct a \textbf{context set} which contains $(K+2)$ pairs of evaluation value on the axis and its associated cdf value and then we can use this set to fit a linear interpolator to approximate the inverse cdf of our desired distribution. More precisely, we can use a fixed grid of $K$ points, $\Bar{\bm y_K}=\{\Bar{y}_1,...,\Bar{y}_K\}$ where $\Bar{y}_1<...<\Bar{y}_K$, within $\mathcal{R}$ and compute $\Bar{c}_i=F(\Bar{y}_i)$ for $i=1,...,K$. Additionally, we set
\begin{equation*}
    \begin{split}
        \Bar{y}_0 = \text{min}(z_{1:n})-\eta\,,\quad & \Bar{c}_0=0\,,\\
        \Bar{y}_{K+1} = \text{max}(z_{1:n})+\eta\,,\quad & \Bar{c}_{K+1}=1\,.
    \end{split}
\end{equation*}
Then, we obtain a context set $\mathcal{C}(\mathcal{R})=\{(\Bar{y}_i,\Bar{c}_i)\}_{i=0}^{K+1}$. Since the cdf is strictly non-decreasing and bounded within $[0,1]$, then we have that
\begin{equation*}
    \Bar{c}_0\leq \Bar{c}_1 \leq ...\leq \Bar{c}_K \leq \Bar{c}_{K+1}\, .
\end{equation*}
Next, we propose to fit the approximation of the generalized inverse cdf $\widehat{F}^{-1}$ via the linear interpolation using the context set $\mathcal{C}(\mathcal{R})$ we constructed. Mathematically, given a value $\Bar{c}\in(0,1)$ we want to evaluate, then
\begin{equation*}
    \Bar{y} = \widehat{F}^{-1}(\Bar{c}) = \sum_{j=0}^K \frac{\Bar{y}_j(\Bar{c}_{j+1}-\Bar{c})+\Bar{y}_{j+1}(\Bar{c}-\Bar{c}_j)}{\Bar{c}_{j+1}-\Bar{c}_j}\cdot \delta_{\{\Bar{c}_j< \Bar{c} \leq \Bar{c}_{j+1}\}}\sim \mathbb{F}\, .
\end{equation*}
By consequence, we can easily obtain the gradient of $\Bar{y}$ w.r.t. $\Bar{c}$ as
\begin{equation*}
    \frac{d\Bar{y}}{d\Bar{c}}=\sum_{j=0}^K \frac{\Bar{y}_{j+1}-\Bar{y}_j}{\Bar{c}_{j+1}-\Bar{c}_j}\cdot \delta_{\{\Bar{c}_j< \Bar{c} \leq \Bar{c}_{j+1}\}}\, .
\end{equation*}
Since we can evaluate the cdf values via $P_{j}^{(n)}$ for $j=1,...,d$, then we can use the linear interpolation method we proposed. For $\forall\,j\in{1,...,d}$, given the set of observations $x^{(1:n)}_j$, we define the support for the $j^{\text{th}}$ marginal as
\begin{equation*}
    \mathcal{R}_j=\Bigl[\text{min}(x^{(1:n)}_j)-\eta, \text{max}(x^{(1:n)}_j)+\eta\Bigr]\, .
\end{equation*}
Following a similar procedure, we can construct a context set $\mathcal{C}^K(\mathcal{R}_j)$ with a grid of $K$ points via $P_{j}^{(n)}$. Next, we can fit a linear interpolator denoted as $\mathcal{LI}_j(\,\cdot\,)$ using the context set $\mathcal{C}^K(\mathcal{R}_j)$. Then,
\begin{equation*}
    \mathcal{LI}_j(\epsilon)\sim P_{j}^{(n)}\,, \quad \text{where}\;\epsilon\sim \mathcal{U}[0,1]\, .
\end{equation*}

\subsection{Comparison to normalising flow on Gaussian Mixture Model}
We assess the performance of the QB-Vine on a mixture of 4 non-isotropic Gaussians across a range of dimensions and sample sizes. We simulate $n$ $d$-dimensional data points from
\begin{align*}
    p(\bm y)\,=\,\sum_{k=1}^4\pi_k\cdot\phi(\bm y ; \bm \mu_k, \bm \Sigma_k)\, ,
\end{align*}
where $(\pi_1,\pi_2,\pi_3,\pi_4)=(0.2,0.3,0.1,0.4)$ and
\begin{align*}
    \bm \mu_k\stackrel{i.i.d.}{\sim} \mathcal{U}[-50,50]^d\,,\quad \bm \Sigma_k\stackrel{i.i.d.}{\sim}\operatorname{Wishart}(d,\bm I_d)\,.
\end{align*}
We compare the QB-Vine with the RQ-NSF as a benchmark off-the-shelf estimator. The hyperparameters for the RQ-NSF were chosen to give the best performance on train data,  to be 100000 epochs, 0.0001 learning rate, 1 flow step, 8 bins, 2 blocks, and 0.2 dropout probability in common. For the number of hidden features, we set $16$ for $d=10$, $32$ for $d=30$, $64$ for $d=50$, and $128$ for $d=100$. Our results in Table \ref{tab:gmm} show that the QB-Vine consistently outperforms the RQ-NSF for the dimensions and sample sizes considered.  

\begin{table}[t]
\centering
\caption{Comparison of LPS for QB-Vine (our method) and RQ-NSF on GMM with 4 clusters for changing \( n \) and \( d \). Results for our QB-Vine method are shown as the top numbers of each row, and RQ-NSF values as the bottom numbers of each row.}
\begin{center}
\begin{tabular}{lccccc}
\toprule 
d \textbackslash \, n &  50 & 100 & 300 & 500 & \(10^3\)\\
\midrule 
\multirow{2}{*}{10} & $\mathbf{{3.98_{\pm{0.23}}}}$ & $\mathbf{{1.73_{\pm{0.29}}}}$ & $\mathbf{2.15_{\pm{0.06}}}$ & $\mathbf{0.94_{\pm{0.31}}}$ & $\mathbf{2.43_{\pm{0.17}}}$\\
\cdashline{2-6}
 & ${36.47_{\pm{4.87}}}$ & ${17.14_{\pm{1.51}}}$ & ${12.82_{\pm{0.36}}}$ & ${7.10 _{\pm{0.26}}}$ & ${7.91_{\pm{0.11}}}$\\
\midrule
\multirow{2}{*}{30} & - & $\mathbf{{17.94_{\pm{1.06}}}}$ & $\mathbf{{11.04_{\pm{0.35}}}}$ & $\mathbf{{12.87_{\pm{0.17}}}}$ & $\mathbf{{9.85_{\pm{0.40}}}}$\\
\cdashline{2-6} 
 & - & ${91.09_{\pm{7.54}}}$ & ${50.51_{\pm{2.20}}}$ & ${48.50_{\pm{0.73}}}$ & ${34.98_{\pm{0.31}}}$\\
\midrule
\multirow{2}{*}{50} & - & - & $\mathbf{{38.59_{\pm{4.31}}}}$ & $\mathbf{{25.82_{\pm{0.06}}}}$ & $\mathbf{{26.14_{\pm{0.01}}}}$\\
\cdashline{2-6} 
 & - & - & ${115.64_{\pm{3.06}}}$ & ${112.16_{\pm{2.05}}}$ & ${71.43_{\pm{1.65}}}$ \\
\midrule
\multirow{2}{*}{100} & - & - & - & - & $\mathbf{{78.20_{\pm{0.23}}}}$ \\
\cdashline{2-6} 
 & - & - & - & - & ${268.88_{\pm{1.37}}}$ \\
\bottomrule
\end{tabular}
\end{center}
\label{tab:gmm}
\end{table}


\end{document}